\documentclass[letterpaper]{article}
\usepackage[preprint]{aaai2027}
\usepackage[hyphens]{url}
\usepackage{graphicx}
\usepackage{natbib}
\usepackage{caption}
\usepackage{amsmath}
\usepackage{amssymb}
\usepackage{booktabs}
\usepackage{array}
\usepackage{xcolor}
\usepackage{soul}
\usepackage{environ}
\usepackage{tikz}
\usetikzlibrary{calc,decorations.pathmorphing,positioning,backgrounds,arrows.meta}

\definecolor{takeawayink}{HTML}{E9DDC0}
\definecolor{catscale}{HTML}{DDE5EE}
\definecolor{cattables}{HTML}{E0E8DC}
\definecolor{catabst}{HTML}{F0E0D5}
\definecolor{cataggr}{HTML}{E5DFEC}
\definecolor{figcream}{HTML}{F1EDE3}
\definecolor{figink}{HTML}{3D3929}
\definecolor{figgray}{HTML}{8A8574}
\definecolor{bulblue}{HTML}{8FA8C9}
\definecolor{bulsage}{HTML}{9DB891}
\definecolor{bulclay}{HTML}{C98F70}
\definecolor{bullav}{HTML}{A99BC7}
\pgfmathsetseed{27}
\soulregister\ref7
\soulregister\eqref7
\soulregister\MET0
\soulregister\UNMET0
\soulregister\CA0
\soulregister\CANNOTASSESS0

\makeatletter
\tikzset{every takeaway stroke/.style={fill=takeawayink, blend mode=multiply}}
\newcommand{\takeaway@DoHighlight}{%
  \fill [ decoration = {random steps, amplitude=1pt, segment length=12pt}
        , outer sep = -15pt, inner sep = 0pt, decorate
        , every takeaway stroke ]
        ($(begin takeaway)+(0,7.2pt)$) rectangle ($(end takeaway)+(0,-2.8pt)$) ;
}
\newcommand{\takeaway@Begin}{\coordinate (begin takeaway) at (0,0) ;}
\newcommand{\takeaway@End}{\coordinate (end takeaway) at (0,0) ;}
\newdimen\takeaway@previous
\newdimen\takeaway@current
\newdimen\takeaway@previousx
\newdimen\takeaway@currentx
\newif\iftakeaway@inside
\newcommand{\takeaway@markend}{\tikz[overlay, remember picture] \takeaway@End ;}
\newcommand{\takeaway@checkbreak}{%
  \begin{tikzpicture}[overlay, remember picture]
    \path let \p0 = (begin takeaway), \p1 = (0,0) in \pgfextra
      \global\takeaway@previous=\y0
      \global\takeaway@current =\y1
      \global\takeaway@previousx=\x0
      \global\takeaway@currentx=\x1
    \endpgfextra (0,0) ;
    \ifdim\takeaway@current=\takeaway@previous\else
      \ifdim\dimexpr\takeaway@previous-\takeaway@current\relax<-50pt
        \ifdim\takeaway@previousx>\takeaway@currentx
          \takeaway@Begin
        \else
          \takeaway@DoHighlight
          \takeaway@Begin
        \fi
      \else
        \takeaway@DoHighlight
        \takeaway@Begin
      \fi
    \fi
  \end{tikzpicture}%
}
\DeclareRobustCommand*\takeawayhighlight{%
  \SOUL@setup
  \takeaway@insidetrue
  \def\SOUL@preamble{%
    \begin{tikzpicture}[overlay, remember picture]
      \takeaway@Begin
      \takeaway@End
    \end{tikzpicture}%
  }%
  \def\SOUL@postamble{%
    \begin{tikzpicture}[overlay, remember picture]
      \takeaway@End
      \takeaway@DoHighlight
    \end{tikzpicture}%
  }%
  \def\SOUL@everyhyphen{%
    \discretionary{%
      \SOUL@setkern\SOUL@hyphkern
      \SOUL@sethyphenchar
      \tikz[overlay, remember picture] \takeaway@End ;%
    }{%
    }{%
      \SOUL@setkern\SOUL@charkern
    }%
  }%
  \def\SOUL@everyexhyphen##1{%
    \SOUL@setkern\SOUL@hyphkern
    \hbox{##1}%
    \discretionary{%
      \tikz[overlay, remember picture] \takeaway@End ;%
    }{%
    }{%
      \SOUL@setkern\SOUL@charkern
    }%
  }%
  \def\SOUL@everysyllable{%
    \takeaway@checkbreak
    \the\SOUL@syllable
    \takeaway@markend
  }%
  \SOUL@
}
\makeatother

\NewEnviron{takeaway}{%
  \unskip\space
  {\toks0={\mbox{\textbf{Takeaway:}} }%
   \toks2=\expandafter{\BODY}%
   \edef\takeawaytmp{\noexpand\takeawayhighlight{\the\toks0 \the\toks2}}%
   \takeawaytmp}%
}

\makeatletter
\g@addto@macro\normalsize{%
  \setlength{\abovedisplayskip}{4pt plus 2pt minus 2pt}%
  \setlength{\belowdisplayskip}{4pt plus 2pt minus 2pt}%
  \setlength{\abovedisplayshortskip}{0pt plus 2pt}%
  \setlength{\belowdisplayshortskip}{2pt plus 1pt minus 1pt}%
}
\makeatother

\newtheorem{fact}{Fact}
\newtheorem{proposition}{Proposition}

\makeatletter
\newcommand{\verdicttoken}[1]{\iftakeaway@inside\takeaway@checkbreak\fi\texttt{#1}\iftakeaway@inside\takeaway@markend\fi}
\makeatother
\newcommand{\MET}{\verdicttoken{MET}}
\newcommand{\UNMET}{\verdicttoken{UNMET}}
\newcommand{\CA}{\verdicttoken{CA}}
\newcommand{\CANNOTASSESS}{\verdicttoken{CANNOT\_ASSESS}}
\newcommand{\cat}[1]{\textsc{#1}\enspace}

\title{Agreement Measurement for Rubric-based LLM Judges:\\ What to Report and Why}
\author{Delip Rao\thanks{Corresponding author.}, Chris Callison-Burch}
\affiliations{University of Pennsylvania\\
delip@seas.upenn.edu, ccb@seas.upenn.edu}

\begin{document}
\maketitle

\begin{abstract}
  Whether a rubric-based LLM judge can replace human annotation is decided by its measured agreement with human labels. Yet the same verdicts can support wildly varying agreement numbers, depending on seemingly minor choices: the judgment scale, the retained cases, the handling of abstentions and invalid outputs, and the pooling of verdicts across items and rubric criteria. The statistics that settle these choices are established, but in psychometrics, econometrics, and corpus annotation rather than in the evaluation practice that needs them. We treat the choices as a measurement protocol that fixes what the reported number estimates before any metric is computed, assemble the relevant results into a single source-attributed analysis, and apply it to three published LLM-judge evaluations. For non-degenerate binary verdicts, Pearson's $r$, Spearman's $\rho$, Kendall's $\tau_b$, the phi coefficient, and the Matthews correlation coefficient are exactly the same statistic, so reporting several repeats one number under different names. Cohen's $\kappa$ differs from them only through a marginal-mismatch factor in $(0,1]$ and shares their asymptotic variance when judge and human assign the positive verdict equally often. Under exclusion, accuracy over all cases is pinned down only to a worst-case interval as wide as the uncovered fraction. On a rubric benchmark carrying per-criterion human labels, protocol choice alone moves reported accuracy from $0.551$ to $0.899$ and carries $\kappa$ across zero, without altering a single verdict. We distill the analysis into a reporting checklist that makes agreement claims reconstructible and comparable.
\end{abstract}

\section{Introduction}
\label{sec:intro}

\begin{figure}[t!]
\centering
\newcommand{\figct}[1]{{\fontsize{6.5}{8.5}\selectfont\bfseries #1}}
\newcommand{\figci}[2]{\textcolor{#1}{\raisebox{0.2ex}{\scalebox{0.6}{$\bullet$}}}\,#2}
\newcommand{\figrl}[1]{{\fontsize{5}{7}\selectfont\color{figgray}\scshape #1}}
\newcommand{\figrv}[1]{{\fontsize{6}{7}\selectfont #1}}
\scalebox{1.0}{%
\begin{tikzpicture}[
  card/.style={rounded corners=3pt, inner sep=4pt, align=left, text=figink,
               font=\fontsize{6}{7.5}\selectfont},
  cardnote/.style={font=\fontsize{5.5}{6.5}\selectfont\itshape, text=figgray,
                   inner sep=1.5pt},
  arr/.style={-{Stealth[length=3.5pt]}, figgray, line width=0.6pt},
  arrlab/.style={font=\fontsize{5.5}{6.5}\selectfont\itshape, text=figgray, inner sep=1.5pt},
]
\node[card, fill=figcream, inner sep=5pt, align=center] (c) {%
  {\fontsize{7.5}{9}\selectfont\bfseries Reconstructible}\\[1.5pt]
  {\fontsize{7.5}{9}\selectfont\bfseries agreement result}\\[2pt]
  {\color{figgray}\rule{2.4cm}{0.3pt}}\\[2.5pt]
  \begin{tabular}{@{}l@{\hspace{4pt}}l@{}}
  \figrl{protocol}  & \figrv{$\mathcal{P}=(s,\mathcal{U},h,a)$} \\
  \figrl{statistic} & \figrv{$T$} \\
  \figrl{reported}  & \figrv{estimate $+$ interval} \\
  \end{tabular}};
\node[card, fill=catscale, above=0.35cm of c] (top) {%
  \figct{Scale \& target}\\[1.5pt]
  \figci{bulblue}{judgment scale}\\
  \figci{bulblue}{matched metric}\\
  \figci{bulblue}{both \MET{} rates}};
\node[card, fill=cataggr, below=0.35cm of c] (bot) {%
  \figct{Aggregation \& inference}\\[1.5pt]
  \begin{tabular}{@{}l@{\hspace{6pt}}l@{}}
  \figci{bullav}{micro / macro / item} & \figci{bullav}{weights, denominators}\\
  \figci{bullav}{resampling unit}      & \figci{bullav}{ensemble tables}\\
  \end{tabular}};
\node[card, fill=cattables, left=0.32cm of c] (lft) {%
  \figct{Evidence}\\[1.5pt]
  \figci{bulsage}{$2{\times}2$ table $+\,N$}\\
  \figci{bulsage}{$FP$ versus $FN$}\\
  \figci{bulsage}{degenerate $=$ NA}};
\node[card, fill=catabst, right=0.32cm of c] (rgt) {%
  \figct{Scope}\\[1.5pt]
  \figci{bulclay}{\CA{} / tie / invalid}\\
  \figci{bulclay}{coverage rates}\\
  \figci{bulclay}{covered / full set}};
\draw[arr] (top.south) -- (c.north) node[arrlab, midway, right=1pt] {defines the target};
\draw[arr] (bot.north) -- (c.south) node[arrlab, midway, right=1pt] {defines the unit};
\draw[arr] (lft.east) -- (c.west);
\draw[arr] (rgt.west) -- (c.east);
\node[cardnote, below=1.5pt of lft.south] {determines every metric};
\node[cardnote, below=1.5pt of rgt.south] {conditions the population};
\end{tikzpicture}}%
\caption{An agreement statistic becomes reconstructible only when it is surrounded by its target, its evidence, its scope, and its unit of inference. Evaluations leave these commitments implicit (Sec.~\ref{sec:related}); the omissions move reported numbers (Secs.~\ref{subsec:case-selfpref}, \ref{subsec:case-cascade}, \ref{subsec:case-rubric}); Secs.~\ref{sec:protocol}--\ref{sec:checklist} state the repairs. Colors mark each card's checklist category (Fig.~\ref{fig:checklist}).}
\label{fig:setup}
\end{figure}

Rubric-based LLM judges are validated by comparing their verdicts with human labels, and the resulting agreement number often decides which judge an evaluator adopts. In one public judge cascade, reconstructed accuracy is $0.874$ if abstentions are excluded, about $0.73$ if they are recoded, and $0.534$ once abstention counts as a third verdict \citep{jung2025trust}. All three numbers describe the same predictions under different handling rules, and pooling rubric decisions instead of averaging per-criterion scores would move the number again. Such choices look like evaluation minutiae and are often left implicit, but they determine the question that the final number answers. Two evaluations reporting $0.87$ and $0.73$ agreement can therefore describe the same judge on the same predictions.

We ask what the reported number is actually measuring once abstentions, invalid outputs, and multiple rubric criteria have been resolved, and how far it speaks for the full evaluation set. The agreement statistics themselves, whose lineage we review in Sec.~\ref{sec:related}, are well understood, so we concentrate on the steps that turn raw judge verdicts into the contingency table from which any such statistic is computed.

Our primary contribution is applying previously known but overlooked statistical results from psychometrics, econometrics, and the corpus-annotation literature to the emerging practice of LLM-judge evaluation, especially with rubrics; we attribute each result where we use it. These results settle choices that evaluators make routinely but rarely state. We define human--LLM agreement at the protocol level, making explicit which side may abstain and how decisions are pooled across criteria and items (Sec.~\ref{sec:protocol}). We restate the binary-metric identities and the $\kappa$--$\phi$ normalization in the form a rubric evaluation needs, with their sampling behavior under matched marginals (Sec.~\ref{sec:binary}). We compare exclusion, negative recoding, and three-class evaluation when either side cannot assess an item, and apply the standard worst-case identification bound to full-set accuracy (Sec.~\ref{sec:abstention}); we have not found that comparison, or the bound, in prior LLM-judge evaluation. We then reanalyze a published self-preference study, an abstaining judge cascade, and a rubric benchmark whose human labels are per-criterion (Secs.~\ref{subsec:case-selfpref},~\ref{subsec:case-cascade}, and~\ref{subsec:case-rubric}), and organize aggregation and multi-judge results into a reporting checklist (Secs.~\ref{sec:aggregation} and~\ref{sec:checklist}). In the cascade, the handling rule alone moves accuracy by $34$ points, more than the judge-to-judge gaps such numbers usually adjudicate; on the rubric benchmark, crossing the four protocol choices moves it by $34.8$ points on one judge's own verdicts. We provide the cascade reconstruction in Appendix~\ref{app:public-abstention} and the full per-criterion rubric benchmark table in Appendix~\ref{app:rubric-table}.

\section{Related Work}
\label{sec:related}

Systems that use an LLM as a judge now anchor evaluation for chat assistants, generated text, and retrieval-augmented generation \citep{zheng2023judging,liu2023geval,gu2024survey}, and their verdicts are validated against human labels before the judge replaces human annotation. Reporting practice varies widely. Our coding of 24 recent LLM-as-judge papers by judged output scale (Tables~\ref{tab:audit-a}--\ref{tab:audit-c}) finds 11 graded or continuous evaluations, most reporting correlations against human ratings, 7 pairwise-preference evaluations with varying tie conventions, 3 binary-verdict evaluations, 2 judge ensembles, and 1 scale-matched mix.\footnote{The single scale-matched entry \citep{bavaresco2025llms} pairs $\kappa$ with categorical datasets and Spearman with graded ones.} Several of the pairwise papers state a tie convention, but none frames tie handling as part of the evaluation target.

\paragraph{Scope and procedure.}
This literature scan is descriptive and non-exhaustive; it is not a systematic review or an estimate of field-wide prevalence. The corresponding author found the 24 papers in Tables~\ref{tab:audit-a}--\ref{tab:audit-c} through Google Scholar searches using terms related to ``LLM Judge'' and ``LLM Judge Rubrics.'' No claim of a systematic screening protocol is made. Each paper was coded for judged output scale, reported metrics and reporting practice, and relevance to the analyses in this paper. The corresponding author performed all coding. No second coder independently checked the entries, no adjudication was conducted, and no inter-coder reliability statistic is available. The tables should therefore be read as illustrative literature examples and boundary cases. Each row corresponds to one paper, identified by its principal system or paper family. The ``redundant correlation coefficients'' noted in the relevance column refer to the binary identity of Sec.~\ref{sec:binary}: on non-degenerate binary vectors, Pearson's $r$, Spearman's $\rho$, Kendall's $\tau_b$, $\phi$, and MCC coincide.

\begin{table*}[t]
\centering
\small
\begin{tabular}{@{}r p{3.1cm} p{3.7cm} p{4.4cm} p{4.3cm}@{}}
\toprule
\textbf{\#} & \textbf{Paper / system} & \textbf{Judged output scale} & \textbf{Metrics / reporting practice} & \textbf{Relevance to this paper} \\
\midrule
1 & G-Eval \citep{liu2023geval} & Graded NLG scores & Reports rank/correlation alignment with human judgments. & \cat{Graded} Multiple correlations are on graded scores, not raw binary verdicts. \\
2 & LLM-Eval \citep{lin2023llmeval} & Numeric multi-dimensional conversation scores & Reports Pearson/Spearman-style score correlations. & \cat{Graded} Not a binary setting. \\
3 & SEEval \citep{wu2023seeval} & Real-valued NLG quality scores & Reports Pearson/Spearman or Spearman/Kendall depending on benchmark. & \cat{Graded} Not a binary setting. \\
4 & CheckEval \citep{lee2025checkeval} & Decomposed binary checklist questions aggregated to ratings & Uses binary checklist questions, then reports aggregate score correlations. & \cat{Binary} The binary equivalence applies to raw yes/no checklist vectors, not aggregated scores. \\
5 & MT-Bench / Chatbot Arena \citep{zheng2023judging} & Pairwise A/B/tie and single-answer ratings & Reports judge\textendash{}human agreement, with and without ties. & \cat{Pairwise} Relevant for tie handling; binary equivalence applies only after excluding or collapsing ties. \\
6 & AlpacaEval / LC-AlpacaEval \citep{dubois2024lengthcontrolled} & Pairwise preference / win rate & Reports win-rate and length-controlled win-rate measures. & \cat{Pairwise} No redundant correlation coefficients. \\
7 & Auto-J \citep{li2023autoj} & Pairwise A/B/tie plus rating and critique & Pairwise preference can include ties. & \cat{Pairwise} Tie handling is explicit, but not framed as selective prediction or \CA. \\
8 & PandaLM \citep{wang2023pandalm} & Pairwise preference judging & Reports $F_1$-style agreement relative to stronger judges or human preferences. & \cat{Pairwise} No redundant Pearson/Spearman/Kendall coefficients found. \\
\bottomrule
\end{tabular}
\caption{Descriptive scan of LLM-as-judge and LLM-evaluator papers, coded by judged output scale, reported metrics, and relevance to this paper's analysis (papers 1--8).}
\label{tab:audit-a}
\end{table*}

\begin{table*}[t]
\centering
\small
\begin{tabular}{@{}r p{3.1cm} p{3.7cm} p{4.4cm} p{4.3cm}@{}}
\toprule
\textbf{\#} & \textbf{Paper / system} & \textbf{Judged output scale} & \textbf{Metrics / reporting practice} & \textbf{Relevance to this paper} \\
\midrule
9 & JudgeLM \citep{zhu2023judgelm} & Pairwise and classification-like judge settings & Reports agreement/accuracy-style measures. & \cat{Pairwise} No redundant correlation coefficients found. \\
10 & LLMBar \citep{zeng2023llmbar} & Pairwise instruction-following contrast & Evaluator chooses the instruction-following output from a pair. & \cat{Pairwise} Motivates scale-aware metrics. \\
11 & Prometheus \citep{kim2023prometheus} & Rubric-based direct assessment and pairwise evaluation & Pearson on graded rubric scores; accuracy on pairwise benchmarks. & \cat{Graded} Mixed setting; direct assessment and binary/pairwise validation are distinct. \\
12 & Prometheus 2 \citep{kim2024prometheus} & Direct assessment plus pairwise ranking & Correlations for direct-assessment scores; agreement/accuracy for pairwise settings. & \cat{Graded} Boundary case: multiple correlations are mostly on graded scores. \\
13 & FLASK \citep{ye2024flask} & Fine-grained 1\textendash{}5 skill scores & Reports Pearson/Spearman/Kendall correlations. & \cat{Graded} Multi-correlation example, but not binary. \\
14 & LLM-Rubric \citep{hashemi2024llmrubric} & Rubric-question response distributions and 1\textendash{}4 satisfaction prediction & Uses rubric dimensions and predicts overall human satisfaction. & \cat{Graded} Not a clean binary verdict setting in the reported overall evaluation. \\
15 & ARES \citep{saadfalcon2023ares} & RAG judge classifiers for context relevance, faithfulness, and relevance & Uses classifier-style evaluation and PPI confidence machinery. & \cat{Binary} Confusion-matrix and precision/recall reporting matter. \\
16 & RAGAS \citep{es2023ragas} & RAG faithfulness/relevance metrics & Reports agreement/validation-style quantities for RAG dimensions. & \cat{Binary} ``Insufficient information'' appears in prompts, but not as a reported \CA{} estimand. \\
\bottomrule
\end{tabular}
\caption{Descriptive scan of LLM-as-judge and LLM-evaluator papers, coded by judged output scale, reported metrics, and relevance to this paper's analysis (papers 9--16, continued).}
\label{tab:audit-b}
\end{table*}

\begin{table*}[t]
\centering
\small
\begin{tabular}{@{}r p{3.1cm} p{3.7cm} p{4.4cm} p{4.3cm}@{}}
\toprule
\textbf{\#} & \textbf{Paper / system} & \textbf{Judged output scale} & \textbf{Metrics / reporting practice} & \textbf{Relevance to this paper} \\
\midrule
17 & PoLL \citep{verga2024replacing} & Panel/jury of LLM evaluators & Evaluates panels across multiple judge settings and datasets. & \cat{Ensemble} Multi-judge motivation; does not show panels as a production default. \\
18 & ChatEval \citep{chan2023chateval} & Multi-agent referee team & Multi-agent discussion followed by scoring/judgment. & \cat{Ensemble} Scale depends on task. \\
19 & JUDGE-BENCH \citep{bavaresco2025llms} & Multiple datasets with categorical and graded annotations & Cohen's $\kappa$ for categorical datasets, Spearman for graded datasets, Krippendorff's $\alpha$ for human agreement. & \cat{Scale-matched} Positive example of scale-aware metric selection. \\
20 & FineSurE \citep{song2024finesure} & Fine-grained summarization dimensions & Multi-dimensional summarization evaluation. & \cat{Graded} More relevant to scale/criteria framing than to binary redundancy. \\
21 & TIGERScore \citep{jiang2023tigerscore} & Trained explainable text-generation metric & Correlation-style meta-evaluation with human ratings. & \cat{Graded} Not binary. \\
22 & InstructScore \citep{xu2023instructscore} & Score plus diagnostic feedback & Correlates generated scores with human ratings. & \cat{Graded} Not binary. \\
23 & InstructEval \citep{chia2023instructeval} & Holistic evaluation of instruction-tuned LLMs & Includes Likert-style writing evaluation and task metrics. & \cat{Graded} Mostly not a binary judge-verdict setting. \\
24 & JuStRank \citep{gera2024justrank} & Pairwise/system-ranking judgments & Studies LLM judges as system rankers. & \cat{Pairwise} Relevant especially after aggregation; no redundant correlation coefficients found. \\
\bottomrule
\end{tabular}
\caption{Descriptive scan of LLM-as-judge and LLM-evaluator papers, coded by judged output scale, reported metrics, and relevance to this paper's analysis (papers 17--24, continued). The small-caps value in the last column is a primary relevance tag, not an exhaustive or mutually exclusive category.}
\label{tab:audit-c}
\end{table*}

Cohen's $\kappa$ \citep{cohen1960kappa}, the phi coefficient and its identity with MCC \citep{matthews1975comparison}, the association coefficients for $2\times2$ tables and their marginal dependence \citep{warrens2008association,warrens2014marginal}, the multi-rater statistics and their integration \citep{scott1955reliability,fleiss1971measuring,krippendorff2004content,conger1980integration}, large-sample variances \citep{fleiss1969large,blackman1993estimating}, and the prevalence problems of $\kappa$ \citep{feinstein1990high} all predate LLM evaluation by decades. The results we apply are likewise established; only the $\alpha$--$\kappa_F$ relation of Sec.~\ref{subsec:ensembles} has already reached computational linguistics, through the annotation-agreement survey of \citet{artstein2008intercoder}. These results bear directly on which agreement number to report, yet the LLM-as-judge papers in our scan neither cite nor apply them.

Work questioning agreement reporting for LLM judges is recent. \citet{elangovan2025beyond} show that one aggregate correlation obscures systematic machine--human differences once human label uncertainty is accounted for, and \citet{chehbouni2025neither} apply measurement theory to argue that judge validity is assumed rather than established. \citet{binette2024estimands} adapt the clinical-trials estimands framework to AI evaluation and exhibit rank reversals under a mis-specified target; we specialize that framing to human--judge agreement in Sec.~\ref{subsec:estimands}. Closest to our setting, \citet{guerdan2025validating} treat rating indeterminacy as a validation-design choice and relate judge-performance measures across agreement metrics, which is the move we make for abstention; we differ in scope, treating the scale, selection, handling, and aggregation choices jointly. \citet{jung2025trust} calibrate judge abstention and escalation to trade coverage for lower error, and \citet{lee2026howto} correct downstream scores using judge sensitivity and specificity. These two design better judges or correct their outputs; our analysis concerns the agreement claim itself.

\section{Agreement as a Measurement Protocol}
\label{sec:protocol}

\begin{figure}[t]
\centering
\includegraphics[width=\columnwidth]{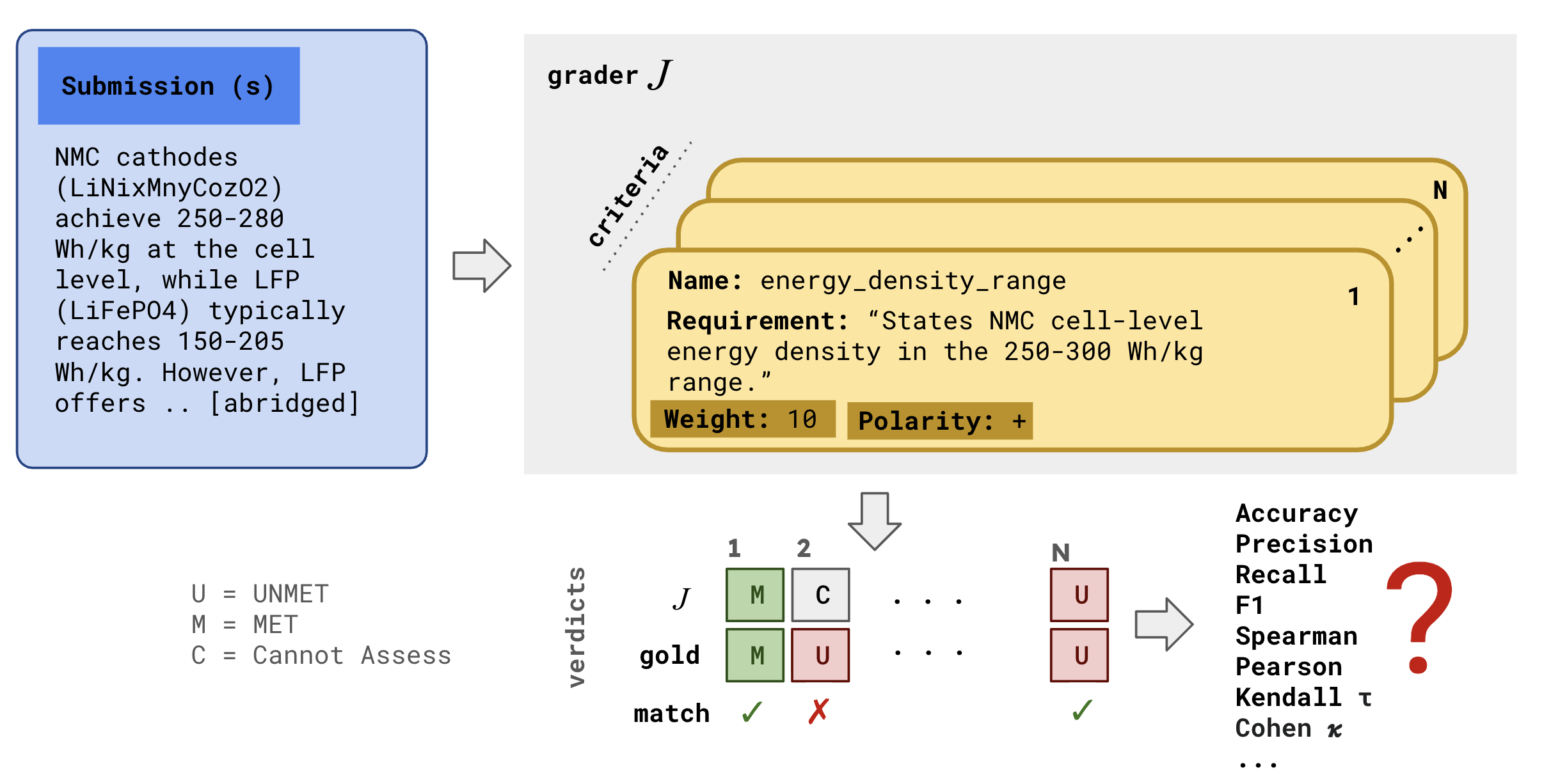}
\caption{Rubric-based LLM judging. An LLM judge $J$ (the grader in the figure) grades a submission against the criteria of a rubric, each stating a name, a requirement, a weight, and a polarity, and emits one verdict per criterion: \MET{} (M), \UNMET{} (U), or \CANNOTASSESS{} (C). Comparing these verdicts with a human annotator's gold verdicts yields the agreement numbers whose choice, meaning, and reporting this paper examines; the energy-density criterion shown is the running example of Sec.~\ref{sec:protocol}.}
\label{fig:rubric-judging}
\end{figure}

We begin with rubrics with binary criteria and the verdict table of which every metric in this paper is an explicit function. Consider validating a rubric judge against human gold labels, one binary verdict per criterion per submission (Fig.~\ref{fig:rubric-judging}). Let $y_i\in\{0,1\}$ be the human label and $\hat y_i$ the judge verdict for decision $i=1,\ldots,N$, with $1$ denoting \MET{} and $0$ \UNMET{}; for a criterion asking whether a submitted battery specification states a per-cell energy density within a required range, $\hat y_i=1$ records that the judge found one. Their complete comparison is the $2\times2$ verdict table with cells $TP=n_{11}$, $FN=n_{10}$, $FP=n_{01}$, and $TN=n_{00}$, where $n_{ab}=|\{i:y_i=a,\hat y_i=b\}|$ counts decisions with human label $a$ and judge verdict $b$, and $N = TP + FN + FP + TN$. We write $\pi = (TP + FN)/N$ for the human \MET{} rate, $\hat{\pi} = (TP + FP)/N$ for the judge \MET{} rate, and $p_o = (TP+TN)/N$ for accuracy, the exact-match rate.

Each metric summarizes a different part of this table. Accuracy, $F_1$, $\kappa$, and $\phi$/MCC are all unchanged when $FP$ and $FN$ are exchanged, while precision and recall swap values; only the full confusion matrix distinguishes a judge that misses true positives from one that over-predicts. Accuracy and $\kappa$ summarize exact matches, $\phi$/MCC and the correlations summarize association, and precision, recall, and $F_1$ describe one class. All compare the judge with a single human reference, so they provide no evidence about inter-judge reliability or broader construct validity. Throughout, the table is non-degenerate: all four marginals $TP{+}FN$, $FN{+}TN$, $TP{+}FP$, and $FP{+}TN$ are positive, so every denominator in the identities below is nonzero. In explicit form, $F_1$ describes only the positive class: with precision $P = TP/(TP + FP)$ and recall $R = TP/(TP + FN)$,
\begin{equation}
\label{eq:f1}
F_1 = \frac{2\,P\,R}{P + R} = \frac{2\,TP}{2\,TP + FP + FN},
\end{equation}
which does not depend on $TN$ and changes when the positive and negative labels are exchanged; the negative-class analog is $F_1^{(\UNMET)} = 2\,TN/(2\,TN + FP + FN)$, and specificity is $TN/(TN+FP)$.
\begin{takeaway}
Always show the per-criterion $2\times2$ tables alongside the metrics computed from them. Any metric can be recomputed from a stored table, but the table cannot be recovered from reported metrics alone.
\end{takeaway}

\subsection{Judgment Scales}
\label{subsec:scale}

Rubric-based LLM judges generalize beyond binary verdicts. Five judgment scales recur in LLM-judge validation: binary \MET/\UNMET{} verdicts, pairwise preference with ties, ordinal ratings, continuous quality judgments, and nominal verdicts with abstention. Table~\ref{tab:scale} names the five scales and the statistics each admits; the equivalence results of Sec.~\ref{sec:binary} concern the binary scale alone. An ordinal example shows how much the choice of statistic alone can move a reported number. Consider a judge that rates one point above the human on every item of a 1--5 ordinal scale, $y=(1,2,3,4)$ and $\hat y=(2,3,4,5)$. It has perfect Pearson, Spearman, and Kendall association but zero exact agreement; unweighted, linear-weighted, and quadratic-weighted $\kappa$ are $-3/13$, $1/3$, and $5/7$. Whether this judge appears worse than chance or perfect depends on the choice of the statistic alone. A practitioner must therefore fix the scale and its matched statistic before computing anything.
\begin{table*}[t]
\centering
\small
\begin{tabular}{@{}p{3.0cm} p{3.2cm} p{8.2cm}@{}}
\toprule
\textbf{Judgment scale} & \textbf{Examples} & \textbf{Applicable statistics} \\
\hline
Binary verdict & \MET/\UNMET{} criterion decisions & Confusion matrix and marginals; accuracy, class-conditional metrics, or $\kappa$ per target; at most one of $\phi$/MCC/Pearson/Spearman/Kendall's $\tau_b$ \\
Pairwise preference & A wins / B wins / tie & An explicit tie-handling convention; after reduction to a binary outcome, the binary-identity results of Sec.~\ref{sec:binary} \\
Ordinal value & 1\textendash{}5 rubric ratings & Weighted $\kappa$ for category agreement under declared distance weights; Spearman and Kendall for rank association, which differs from exact agreement \\
Continuous value & Real-valued quality judgments & Pearson for linear association, Spearman for monotone association, Kendall for pair concordance; a scale-sensitive error measure when agreement in level matters \\
Nominal with abstention & \MET/\UNMET/\CA & A three-class agreement statistic with an explicit weight matrix, or a stated reduction mode for collapsing to two classes \\
\hline
\end{tabular}
\caption{A judgment-scale taxonomy for LLM-as-judge evaluation.}
\label{tab:scale}
\end{table*}

\begin{takeaway}
A rubric judge often emits binary criterion verdicts that a weighted rule later combines into an item score. The verdicts and the score sit on different judgment scales and call for different statistics; a report should state whether its number describes the criterion verdicts or the item score.
\end{takeaway}

\subsection{Protocols and Estimands}
\label{subsec:estimands}

Beyond the choice of the judgment scale, the practitioner makes three further choices before computing any statistic: which subset of the data to select, how to resolve exceptions such as abstentions and invalid generations, and how to pool or weight verdicts across criteria and items. We collect the four choices into a \emph{measurement protocol} $\mathcal{P}=(s,\mathcal{U},h,a)$ (Fig.~\ref{fig:setup}), where $s$ maps outputs to a judgment scale, $\mathcal{U}$ is the selected population of cases, $h$ specifies the handling rule for exceptions, and $a$ specifies the aggregation rule. When the judge, the human, or both may abstain, for example, $s$ could be a binary or three-class representation and $h$ could exclude, recode, or retain a cannot-assess (\CA) verdict. Under recoding or retention, $\mathcal{U}$ contains every case; under exclusion, it contains only the cases in which both the human and the LLM judge returned a \MET{} or \UNMET{} verdict. A judge abstains when the submission lacks the evidence a criterion asks about; a human annotator may abstain for the same reason or because the criterion does not apply to the submission. In our experience, current LLM-judge evaluations rarely handle these outcomes systematically: prompts often permit an ``insufficient information'' answer \citep{es2023ragas} and pairwise protocols record ties \citep{zheng2023judging,li2023autoj}, but the handling rule and its effect on the reported number are seldom stated.

The protocol fixes the \emph{estimand}, the quantity that the reported number estimates. Applying $\mathcal{P}$ to the raw outputs yields either a single joint distribution $P_{\mathcal{P}}$ over paired human--judge values or, when criteria are scored separately and averaged with weights $w_g$, one distribution $P_{\mathcal{P},g}$ per criterion $g$. For statistic $T$, the estimand is
\begin{equation}
\label{eq:protocol-estimand}
\theta_{T,\mathcal{P}}=
\begin{cases}
T(P_{\mathcal{P}}), & \text{single distribution},\\
\sum_g w_gT(P_{\mathcal{P},g}), & \text{per-criterion average},
\end{cases}
\end{equation}
and replacing each population distribution with its empirical table gives the estimate $\hat\theta_{T,\mathcal{P}}$. The two subscripts separate the choice of statistic from the choice of protocol. Evaluation reports name the statistic as a matter of course but often leave the protocol implicit, and the same statistic computed under different protocols estimates different quantities; a result is therefore reconstructible only when both are reported.
\begin{takeaway}
Decide the four protocol components (the judgment scale, the selected population, the handling rule for exceptions, and the aggregation rule) before looking at any judge outputs, and evaluate every judge under comparison with the same protocol. A handling rule chosen after the results are in can be the one that gives the judge its highest agreement number. When two judges are scored under different protocols, their numbers estimate different quantities, and the difference between them reflects the protocols as much as the judges.
\end{takeaway}

\section{Identities and Distinctions Among Binary Agreement Metrics}
\label{sec:binary}

Of the estimand's two coordinates, the statistic $T$ and the protocol $\mathcal{P}$, we examine the statistic first. For rubrics with binary criteria, the choice among the familiar metrics of Sec.~\ref{sec:protocol} is smaller than it appears: the five agreement metrics are one statistic, Cohen's $\kappa$ differs from them only through its normalization, and under matched \MET{} rates even the asymptotic sampling variances coincide.

\subsection{Five Metrics, One Statistic}
\label{subsec:equiv}

A validation report that includes several of the five metrics---Pearson's $r$, Spearman's $\rho$, Kendall's $\tau_b$, $\phi$, and the Matthews correlation coefficient (MCC)---invites reading their agreement as corroborating evidence.

\begin{fact}[\citealp{matthews1975comparison}; \citealp{kendall1945treatment}; \citealp{warrens2008association}]\label{obs:equiv}
For any pair of binary vectors $(y, \hat{y}) \in \{0,1\}^N$ in which each vector contains both labels,
\begin{equation}
\label{eq:obs1}
\rho_{\text{Pearson}} = \rho_{\text{Spearman}} = \tau_b = \phi = \text{MCC}.
\end{equation}
\end{fact}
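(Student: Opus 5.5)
The plan is to express every one of the five quantities as a function of the $2\times2$ verdict table $(TP,FN,FP,TN)$ of Sec.~\ref{sec:protocol}, and to show that each reduces to the same closed form,
\begin{equation*}
\phi=\frac{TP\cdot TN-FP\cdot FN}{\sqrt{(TP+FN)(FP+TN)(TP+FP)(FN+TN)}} .
\end{equation*}
The non-degeneracy hypothesis (each vector contains both labels) is exactly the condition that all four marginals are positive, so this denominator is nonzero and every coefficient is defined. MCC is \emph{defined} by this expression \citep{matthews1975comparison}, and $\phi$ is the standard phi coefficient of the table, so that equality is immediate.

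For Pearson, I will compute directly with $N$ observations. Write the sample means as $\pi$ and $\hat\pi$. The covariance term is $\sum_i (y_i-\pi)(\hat y_i-\hat\pi)=TP-N\pi\hat\pi$. Because $y_i^2=y_i$, the variance terms are $N\pi(1-\pi)$ and $N\hat\pi(1-\hat\pi)$. Multiplying through by $N$, the numerator becomes $N\cdot TP-(TP+FN)(TP+FP)$, which expands to $TP\cdot TN-FP\cdot FN$ since $N=TP+FN+FP+TN$. The products $N\pi$, $N(1-\pi)$, $N\hat\pi$, $N(1-\hat\pi)$ are the four marginals, so the denominator matches as well.

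For Spearman, I will use the fact that Spearman's $\rho$ is Pearson's $r$ applied to midranks. With ties, a binary vector's midranks are an affine image of the vector itself. Every $0$ receives the common rank $(n_0+1)/2$ and every $1$ receives $n_0+(n_1+1)/2$, where $n_0,n_1$ are the label counts. This map is increasing, and Pearson is invariant under increasing affine transformations of each argument, so $\rho_{\text{Spearman}}=\rho_{\text{Pearson}}$.

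The step needing the most care is Kendall's $\tau_b$, which is defined as
\begin{equation*}
\tau_b=\frac{n_c-n_d}{\sqrt{(n_0'-n_1')(n_0'-n_2')}},
\end{equation*}
where $n_0'=\binom N2$ and $n_1',n_2'$ are the pairs tied in $y$ and in $\hat y$. For the numerator, a pair is concordant exactly when one member is in cell $(1,1)$ and the other in $(0,0)$, and discordant when the members are in $(1,0)$ and $(0,1)$. Hence $n_c-n_d=TP\cdot TN-FN\cdot FP$. For the denominator, the pairs untied in $y$ number $n_0'-n_1'=(TP+FN)(FP+TN)$, and the pairs untied in $\hat y$ number $(TP+FP)(FN+TN)$. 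Substituting gives exactly $\phi$. The main obstacle is only bookkeeping: fixing the $\tau_b$ tie-correction convention (citing \citealp{kendall1945treatment}) and the midrank convention for Spearman, since other tie conventions, such as $\tau_a$, would break the identity. I will note this explicitly.
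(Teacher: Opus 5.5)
Your proposal is correct and follows the paper's argument step for step. Pearson reduces to the $\phi$/MCC expression; the midrank map for Spearman is an affine image of each binary vector, so Pearson's affine invariance gives equality; and for $\tau_b$ the concordant and discordant counts $TP\cdot TN$ and $FN\cdot FP$, together with the untied-pair counts, reproduce $\phi$ exactly. The only difference is that you verify the Pearson-to-$\phi$ reduction explicitly through the covariance and variance computation, where the paper simply cites \citet{matthews1975comparison}.
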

A second coefficient from the list adds no information on binary verdicts, and the identity is elementary to verify. For binary vectors the Pearson correlation reduces to the phi coefficient \citep{matthews1975comparison},
\begin{equation}
\label{eq:phi}
\phi = \frac{TP \cdot TN - FP \cdot FN}{\sqrt{(TP{+}FP)(TP{+}FN)(TN{+}FP)(TN{+}FN)}},
\end{equation}
which is identical to the binary Matthews correlation coefficient.

\paragraph{Spearman.} Spearman's $\rho$ is Pearson's $r$ computed on the rank-transformed inputs. With only two distinct values present, all zeros receive the common average rank $a_0 = (n_0 + 1)/2$ and all ones receive $a_1 = n_0 + (n_1 + 1)/2$, where $n_0, n_1$ are the counts of zeros and ones. The map $x \mapsto a_0 + (a_1 - a_0)\,x$ is affine, and Pearson's $r$ is invariant under affine rescaling of either input, so $\rho_{\text{Spearman}} = \rho_{\text{Pearson}}$.

\paragraph{Kendall's $\tau_b$.} The tie-corrected form \citep{kendall1945treatment} is
\begin{equation}
\label{eq:taub}
\tau_b = \frac{C - D}{\sqrt{(n_{\text{tot}} - n_y)(n_{\text{tot}} - n_{\hat{y}})}},
\end{equation}
where $n_{\text{tot}}=\binom{N}{2}$ and $n_y,n_{\hat y}$ count pairs tied on each vector. In the $2\times2$ table, concordant cross-cell pairs number $C=TP\cdot TN$ and discordant pairs number $D=FN\cdot FP$. The untied-pair counts are $(TP+FN)(FP+TN)$ for $y$ and $(TP+FP)(FN+TN)$ for $\hat y$. Therefore,
\begin{equation}
\label{eq:taub2}
\begin{aligned}
\tau_b &= \frac{TP \cdot TN - FP \cdot FN}{\sqrt{(TP+FN)(FP+TN)}} \\[2pt]
&\quad\times \frac{1}{\sqrt{(TP+FP)(FN+TN)}} = \phi.
\end{aligned}
\end{equation}
\begin{takeaway}
Report only one of Pearson, Spearman, Kendall's $\tau_b$, $\phi$, and MCC on binary verdicts; they are interchangeable there, and $\phi$/MCC is the most direct. If two of them differ in a pipeline's output, suspect a degenerate criterion or an implementation error rather than judge behavior.
\end{takeaway}

\subsection{Marginal Normalization: Kappa Versus Phi}
\label{subsec:kappaphi}

Cohen's $\kappa$ falls outside the equivalence class. It discounts the agreement expected from the marginal rates alone, $\kappa = (p_o - p_e)/(1 - p_e)$ with $p_e = \pi\hat{\pi} + (1-\pi)(1-\hat{\pi})$ \citep{cohen1960kappa}. Multiplying $\kappa$ through by $N^2$ and canceling shared terms yields the count form
\begin{equation}
\kappa = \frac{2\,(TP \cdot TN - FP \cdot FN)}{(TP{+}FN)(FN{+}TN) + (TP{+}FP)(FP{+}TN)},
\end{equation}
which shares the numerator of $\phi$ in Eq.~\eqref{eq:phi} up to a factor of two. Writing $A = (TP+FN)(FN+TN)$ and $B = (TP+FP)(FP+TN)$, the two denominators are $A+B$ and $2\sqrt{AB}$, and the AM--GM inequality $A + B \ge 2\sqrt{AB}$, with equality iff $A = B$, yields the multiplicative identity
\begin{equation}
\label{eq:ratio}
\kappa = q(\pi,\hat{\pi})\,\phi, \qquad
q(\pi,\hat{\pi}) = \frac{2\sqrt{AB}}{A + B} \in (0,1],
\end{equation}
so $|\kappa| \le |\phi|$. Here $A=B$, and with it $q=1$, exactly when the judge and human \MET{} rates coincide ($\pi=\hat\pi$), so $|\kappa|=|\phi|$ when those rates match or both metrics are zero. Otherwise $q$ quantifies how much marginal mismatch attenuates $\kappa$ relative to $\phi$: a judge that orders the items exactly as the human does but commits to a different \MET{} rate receives a strictly lower $\kappa$ than $\phi$. Equation~\eqref{eq:ratio} is due to \citet{sahu2024linear} and extends a longer line on the marginal dependence of the $2\times2$ $\kappa$ \citep{warrens2008association,warrens2014marginal}; we restate it because it turns a familiar complaint about $\kappa$ into a number an evaluator can compute from the table already in hand.

The rescaled-accuracy form of $\kappa$ also explains a failure mode familiar from rubrics with easy criteria. In the balanced case $\pi = \hat{\pi} = 0.5$, $p_e = 0.5$ and $\kappa = 2\,p_o - 1$. For a criterion that nearly every response meets, both marginals approach $1$, $p_e \to 1$, and high raw agreement can then coexist with much lower chance-corrected agreement, a form of the \emph{kappa paradox} discussed by \citet{feinstein1990high}.\footnote{Alternative chance corrections substitute different baselines \citep{brennan1981coefficient,gwet2008computing,byrt1993bias}; none removes the need for the marginals and the contingency table.}
\begin{takeaway}
Before reading a low $\kappa$ as a bad judge, check the \MET{} rates: on rubric criteria that nearly all submissions meet or fail, $\kappa$ is attenuated by construction. Compare $\kappa$ across criteria only when their \MET{} rates are comparable, and report the human and judge rates with every $\kappa$.
\end{takeaway}

\subsection{Sampling Variance under Matched Marginals}
\label{subsec:variance}

Sampling variance completes the comparison: for multinomial sampling of a non-degenerate table with matched \MET{} rates, $\hat\kappa$ and $\hat\phi$/MCC also share the same asymptotic variance, a consequence of $q$ being stationary at $\pi=\hat\pi$. To derive it, let $\mathbf{p}=(p_{11},p_{10},p_{01},p_{00})^\top$ denote the table's population cell-probability vector. Both $\kappa$ and $\phi$ are ratios of polynomials in $\mathbf{p}$ whose denominators are nonzero at any non-degenerate table, so both are continuously differentiable in a neighborhood of $\mathbf{p}$ and the delta method applies. On the matched-marginals slice $p_{10}=p_{01}$, both metrics reduce to the same function, $\kappa=\phi=1-p_{10}/(p_{1\cdot}p_{0\cdot})$. Their gradients therefore agree along the two tangent directions of that slice. The slice-orthogonal direction $(0,+1,-1,0)$ swaps $p_{10}$ and $p_{01}$; both metrics are symmetric and stationary under this swap. Thus, the two four-dimensional gradients can differ only along the simplex normal $\mathbf{1}$. The multinomial covariance
\begin{equation}
\label{eq:multinomial-covariance}
\Sigma=N^{-1}\!\left(\operatorname{diag}(\mathbf{p})-\mathbf{p}\mathbf{p}^\top\right)
\end{equation}
satisfies $\Sigma\mathbf{1}=0$, so the delta-method quadratic forms $\nabla\kappa^\top\Sigma\nabla\kappa$ and $\nabla\phi^\top\Sigma\nabla\phi$ coincide. This gives the claimed equality $N\operatorname{AVar}(\hat{\kappa}) = N\operatorname{AVar}(\hat{\phi})$. Because both estimators are ratios of multinomial counts, the statement is about their limiting distribution: at finite $N$ a marginal can be degenerate with positive probability, leaving $\hat\phi$ undefined.

Matched marginals means that the judge and the human assign \MET{} equally often; it does not mean that they agree item by item. The equality is also pointwise, offering no guarantee when the two \MET{} rates differ. In rubric evaluation the unit of analysis is rarely an isolated criterion decision: criteria are clustered within items (multiple criteria per submission), within prompts or rubrics (multiple items sharing a rubric), and within judges (in an ensemble, multiple decisions by the same model). A flat decision-level resample understates uncertainty when within-cluster correlations are positive, as is the norm when a single judge mistake propagates across the criteria of one item. In practice an item-level or hierarchical cluster bootstrap (resample items, then optionally criteria within items) supports judge comparisons, with undefined replicates reported; standard implementations are available in the survey-statistics literature \citep{field2007bootstrapping}.

As a numerical check of the equality, cell probabilities $(0.4,0.1,0.1,0.4)$ give $\kappa=\phi=0.6$ and $\operatorname{Var}(\hat\kappa)=\operatorname{Var}(\hat\phi)=0.640/N$ under the multinomial delta method \citep{fleiss1969large}; treating these binary values as continuous produces the incorrect $0.41/N$.

A further numerical check locates the equality's boundary. Off matched marginals the variances diverge by a second-order term: at $(0.4, 0.1, 0.2, 0.3)$, where $\pi = 0.5$, $\hat{\pi} = 0.6$, $\kappa = 0.400$, and $\phi = 1/\sqrt{6} \approx 0.408$, the delta method gives $\operatorname{Var}(\hat\kappa) = 504/(625N) \approx 0.806/N$ and $\operatorname{Var}(\hat\phi) = 13/(16N) \approx 0.813/N$; the variance gap is second-order in the marginal mismatch, mirroring the pointwise gap.
\vadjust pre{\penalty-10000}%
\begin{takeaway}
Rank judges only with an uncertainty estimate attached; at $N=300$ binary decisions the delta-method standard error of $\hat\kappa$ falls between $0.04$ and $0.06$ for most tables, so gaps of a few points are within sampling noise. Resample whole items rather than individual verdicts, and avoid variance formulas built for continuous values.
\end{takeaway}

\subsection{Identities and Distinctions in a Published Evaluation}
\label{subsec:case-selfpref}

A metric can be structurally unable to detect the failure an evaluator is looking for. As a case study, consider the paper of \citet{llm-judge-self-bias-watoka} on self-preference bias, the tendency of a judge to favor its own generations \citep{llm-judge-self-bias-panickssery}.\footnote{We are not singling out this work; it is a well-cited example chosen for illustration. The source excludes format-noncompliant outputs but does not report how Chatbot Arena ties were handled in this binary table.} The paper reports a human--GPT-4 win/lose table (their Fig.~2), derived from position-swap-averaged judge probabilities, for GPT-4 judging comparisons that involve its own outputs, and bases its conclusions on the two class recalls; the statistics we present are our reanalysis. Taking ``GPT-4's own output wins'' as positive gives the verdict table $TP=1{,}852$, $FN=108$, $FP=160$, $TN=118$ over $N=2{,}238$ comparisons: accuracy $0.880$ and $F_1=0.933$, against $0.876$ and $0.934$ for a baseline that always declares GPT-4 the winner. By accuracy and by $F_1$, which ignores $TN$, the judge is indistinguishable from that baseline. The agreement metrics separate them: Pearson's $r$, Spearman's $\rho$, Kendall's $\tau_b$, $\phi$, and MCC are each $0.404$, as Fact~\ref{obs:equiv} requires, and $\kappa=0.402$; the small gap follows from the nearly matched marginals of Eq.~\eqref{eq:ratio}. The baseline has $\kappa=0$ and undefined $\phi$, while the judge's specificity on human-labeled losses is $0.424$; the failure is concentrated on a class that is $12.4\%$ of the data and invisible to $F_1$. The source's own bias measure is the gap between the two class recalls, $0.945$ against $0.424$, about $0.52$, which it frames as an equal-opportunity difference; that contrast is class-conditional and answers a different question from the agreement metrics above. Each reading summarizes the same $2{,}238$ verdicts; the disagreement comes entirely from which cells of the table each metric uses.

\paragraph{Significance analysis.}
The judge and the always-positive baseline are compared on the same $2{,}238$ paired decisions, so paired accuracy is tested with McNemar's exact test \citep{mcnemar1947note} on the discordant counts $(118, 108)$: $p=0.549$, and accuracy does not distinguish the judge from the baseline. The agreement metrics separate them with uncertainty attached: resampling the $2{,}238$ decisions with replacement ($10^4$ replicates, none undefined) gives percentile intervals of $[0.346, 0.463]$ for the judge's $\phi=0.404$ and $[0.343, 0.460]$ for $\kappa=0.402$; both intervals exclude the baseline's structural $\kappa=0$.

\section{Handling Abstention}
\label{sec:abstention}

The binary analysis assumed that humans and LLM judges always choose \MET{} or \UNMET{} when evaluating an input against a criterion. However, it is good practice---for both humans and LLMs---to abstain from a judgment by producing a \CANNOTASSESS{} (\CA) response to avoid making a forced choice.\footnote{Equivalent labels in the literature include \verdicttoken{IDK}, \verdicttoken{REFUSE}, and \verdicttoken{ABSTAIN}.} A judge cannot assess that energy-density criterion, for example, when the specification states no per-cell density at all. A \CA{} output carries information that \UNMET{} does not, and the practitioner must decide what it represents before computing agreement.

\subsection{Abstention Handling Modes}
\label{subsec:modes}

Let $\mathcal{D} = \{(y_i, \hat{y}_i)\}_{i=1}^N$ with $y_i, \hat{y}_i \in \{\MET, \UNMET, \CA\}$. Three handling rules recur, and each targets a different estimand.

\noindent\textbf{Exclusion:} Introduce latent forced-choice verdicts $y_i^*, \hat{y}_i^* \in \{0, 1\}$ and abstention indicators $m_i, \hat{m}_i \in \{0, 1\}$. The observed value is $y_i=y_i^*$ when $m_i=0$ and $y_i=\CA$ otherwise, symmetrically for the judge. Exclusion discards a pair when either side abstains. For statistic $T$, exclusion targets $T\bigl(\mathbb{P}_{(y^*, \hat{y}^*) \,\mid\, m = 0 \,\wedge\, \hat{m} = 0}\bigr)$, agreement conditional on neither side abstaining; it describes all cases only under additional assumptions taken up in Sec.~\ref{subsec:bounds}. If \CA{} means that the criterion is genuinely inapplicable, with no latent binary verdict behind it, a three-class representation or an explicitly restricted population is the better match.

\noindent\textbf{Negative recoding:} Map \CANNOTASSESS{} to \UNMET{} in both vectors, then compute the binary metric. This is a valid one-versus-rest reduction when the target is \MET{} versus non-\MET{}; it no longer distinguishes \CA{} from \UNMET{}, so the negative label changes meaning. Recoding also makes an abstaining judge and a committing judge indistinguishable: one that abstains on uncertain negatives and one that answers \UNMET{} on the same cases receive identical scores, and under a training objective that rewards calibrated abstention, as in selective prediction, the recoding erases the very signal the objective conditions on.

\noindent\textbf{Three-class retention:} Retain \CANNOTASSESS{} as a distinct category and report a $3 \times 3$ agreement metric such as multi-class accuracy, multi-class Cohen's $\kappa$ \citep{conger1980integration}, or a weighted variant whose weights state how costly each type of disagreement is.

\begin{table}[t]
\centering
\small
\begin{tabular}{@{}lrrrr@{}}
\toprule
\multicolumn{5}{@{}l}{(a) Verdict counts} \\
\midrule
$y \backslash \hat{y}$ & \MET & \UNMET & \multicolumn{2}{r}{Total} \\
\MET & 1{,}003 & 181 & \multicolumn{2}{r}{1{,}184} \\
\UNMET & 420 & 97 & \multicolumn{2}{r}{517} \\
\CA & 242 & 64 & \multicolumn{2}{r}{306} \\
Total & 1{,}665 & 342 & \multicolumn{2}{r}{2{,}007} \\
\midrule
\multicolumn{5}{@{}l}{(b) Statistics under the three handling rules} \\
\midrule
Rule & $N_{\mathrm{eff}}$ & Acc & $\kappa$ & $F_1^{(\MET)}$ \\
Exclusion & 1{,}701 & 0.647 & 0.040 & 0.769 \\
Recoding & 2{,}007 & 0.580 & 0.047 & 0.704 \\
Three-class & 2{,}007 & 0.548 & 0.032 & n/a \\
\bottomrule
\end{tabular}
\caption{The $2{,}007$ rubric decisions of \citet{hashemi2024llmrubric} (most probable option as the verdict, \MET{} $=$ rating $\ge3$) evaluated under three abstention-handling rules; recoding maps \CA{} to \UNMET{}. Only the human abstains, so the judge has no \CA{} column.}
\label{tab:abstention-example}
\end{table}

We apply all three rules to one set of real rubric decisions (Table~\ref{tab:abstention-example}): nine criteria scored on $223$ human--AI dialogues by a human annotator and by an LLM judge \citep{hashemi2024llmrubric}, in which the human may record that a criterion does not apply and the judge is forced to choose. We analyze this benchmark in Sec.~\ref{subsec:case-rubric}; here it supplies the counts. The rules produce different numbers because they change the scored population or the category space. Exclusion conditions on the $1{,}701$ decisions the human scored, whereas recoding scores all $2{,}007$ after merging \CA{} with \UNMET{}; three-class accuracy and $\kappa$ retain the full category space. Binary $F_1^{(\MET)}$ applies only after a binary reduction. Accuracy moves $10$ points across the three rules while $\kappa$ moves by $0.015$: which rule is chosen matters more for one summary than for the other, and neither ordering is a property of the judge.

Recoding carries one exact invariance, specific to \MET-class $F_1$.
\begin{proposition}\label{prop:recoded-f1}
Let $n_{ij}$, $i,j\in\{1,0,C\}$, be a $3\times3$ verdict table with human rows and judge columns, where $1$ denotes \MET, $0$ denotes \UNMET, and $C$ denotes \CA, and form the $2\times2$ table that merges $C$ into $0$ in both vectors. The \MET-class $F_1$ of the merged table equals the one-versus-rest \MET-class $F_1$ of the original table.
\end{proposition}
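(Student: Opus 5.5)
The plan is to compute the three cells that \MET-class $F_1$ depends on, directly in each of the two tables, and check that they match. By Eq.~\eqref{eq:f1}, $F_1=2\,TP/(2\,TP+FP+FN)$, so the \MET-class $F_1$ of any table is a function of $TP$, $FP$, and $FN$ alone. It does not depend on $TN$. It therefore suffices to show that the merged $2\times2$ table and the one-versus-rest reduction of the $3\times3$ table have the same $TP$, $FP$, and $FN$.

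\textbf{One-versus-rest counts.} The one-versus-rest \MET-class quantities are read off the $3\times3$ table by treating \MET{} as positive and $\{\UNMET,\CA\}$ as negative. This gives
\[
TP=n_{11},\qquad FN=n_{10}+n_{1C},\qquad FP=n_{01}+n_{C1}.
\]

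\textbf{Merged counts.} Merging $C$ into $0$ in both vectors produces the merged cells
\[
\begin{aligned}
\tilde n_{11}&=n_{11}, & \tilde n_{10}&=n_{10}+n_{1C},\\
\tilde n_{01}&=n_{01}+n_{C1}, & \tilde n_{00}&=n_{00}+n_{0C}+n_{C0}+n_{CC}.
\end{aligned}
\]
The first three cells coincide with the one-versus-rest $TP$, $FN$, and $FP$. Substituting them into Eq.~\eqref{eq:f1} then gives identical values of $F_1$.

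\textbf{Main obstacle: the degenerate case.} The only delicate point is when $2\,TP+FP+FN=0$, that is, when there is no \MET{} on either side. In that case both $F_1$ values are undefined together, so the identity should be read as equality of extended values, both undefined. I would state this explicitly rather than invoke the non-degeneracy assumption of Sec.~\ref{sec:protocol}, since a merged table can have $TP=0$ while its marginals remain positive.

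Finally, I would remark that the argument shows the invariance does not extend to $\kappa$, accuracy, or $\phi$. Each of these uses $TN$, and $\tilde n_{00}$ absorbs the disagreements $n_{0C}$ and $n_{C0}$ that three-class retention counts as errors.
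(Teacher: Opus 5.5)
Your proposal is correct and takes the paper's own route: the paper likewise identifies $TP=n_{11}$, $FP=n_{01}+n_{C1}$, and $FN=n_{10}+n_{1C}$ in both the merged and the one-versus-rest tables, concludes via Eq.~\eqref{eq:f1}, and notes that accuracy and $\kappa$ additionally depend on the merged $TN$ cell. Your degenerate-case caveat is a harmless addition that the paper leaves to its standing non-degeneracy assumption, which already forces $2\,TP+FP+FN>0$; the reason you give for it is beside the point, though, since a table with $TP=0$ and positive marginals still has a well-defined $F_1=0$.
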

\noindent\emph{Proof.} Both quantities equal $2n_{11}/(2n_{11}+f_p+f_n)$ with $f_p=n_{01}+n_{C1}$ and $f_n=n_{10}+n_{1C}$, because merging $C$ into $0$ changes neither the $n_{11}$ cell nor which pairs count as false positives or false negatives with respect to \MET. Accuracy and $\kappa$ depend in addition on the merged negative--negative cell, so they have no such invariance.\hfill$\square$

With three labels the binary identity also breaks: Fact~\ref{obs:equiv} relies on each vector having exactly two values. The statistic and the disagreement costs are therefore modeling choices. Fact~\ref{obs:equiv} also fails on binary data when a marginal is constant, but the two failures differ: a degenerate marginal can be repaired by dropping or flagging the criterion, while the three-class break persists under every encoding.
\begin{proposition}\label{prop:separation}
On three-valued vectors the product-moment and rank statistics need not coincide: there exist $y,\hat y\in\{0,1,2\}^6$ on which Pearson's $r$, Spearman's $\rho$, and Kendall's $\tau_b$ are pairwise distinct.
\end{proposition}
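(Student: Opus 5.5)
The proof is by explicit construction: exhibit one pair $y,\hat y\in\{0,1,2\}^6$ and compute the three statistics exactly. The first step explains why Spearman can differ from Pearson once a third value appears, which guides the choice of vectors. The argument of Fact~\ref{obs:equiv} for Spearman relied on the rank transform being affine in the values.

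\textbf{Step 1: the mechanism.} With three values and counts $n_0,n_1,n_2$, the average ranks are
\[
a_0=\tfrac{n_0+1}{2},\qquad a_1=n_0+\tfrac{n_1+1}{2},\qquad a_2=n_0+n_1+\tfrac{n_2+1}{2}.
\]
Their successive gaps are $(n_0+n_1)/2$ and $(n_1+n_2)/2$. The map $x\mapsto a_x$ is therefore affine on $\{0,1,2\}$ exactly when $n_0=n_2$. So at least one of the two vectors must have unequal extreme counts, for example $y=(0,0,0,1,2,2)$ with counts $(3,1,2)$. Otherwise Spearman collapses to Pearson by the same invariance argument as in Fact~\ref{obs:equiv}. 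Kendall's $\tau_b$ in Eq.~\eqref{eq:taub} is a pair-counting statistic, not a product moment, so there is no structural reason for it to match either one.

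\textbf{Step 2: a candidate pair.} I would fix $y=(0,0,0,1,2,2)$ and an asymmetric $\hat y$ that is imperfectly monotone in $y$, such as $\hat y=(0,0,1,2,1,2)$ with counts $(2,2,2)$. Then:
\begin{itemize}
\item Pearson's $r$ comes from the raw values.
\item Spearman's $\rho$ is Pearson on the rank vectors. For $y$ these are $(2,2,2,4,5.5,5.5)$; for $\hat y$ they are $(1.5,1.5,3.5,5.5,3.5,5.5)$.
\item Kendall's $\tau_b$ needs the concordant and discordant counts $C,D$ over the 15 pairs and the tie counts $n_y=3+1=4$ and $n_{\hat y}=3$. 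Its denominator is then $\sqrt{(15-4)(15-3)}=\sqrt{132}$.
\end{itemize}
All three are rational expressions or square roots of small integers, so they can be compared exactly.

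\textbf{Main obstacle.} The hard part is not the theory but guaranteeing that the three values are \emph{pairwise} distinct rather than accidentally coinciding in one pair. An unlucky choice could make $\tau_b$ equal $r$, or $\rho$ equal $r$. If the first candidate produces a coincidence, I would perturb $\hat y$ by swapping one entry, which breaks any accidental equality generically. A cheap check first is to compare the irrational parts of the denominators: $r$ and $\rho$ have different variance products, and $\tau_b$ has $\sqrt{132}$. This usually certifies distinctness without evaluating decimals. I would conclude by stating the final pair and the three exact values. I would also remark that this failure persists under every relabeling of the three categories, in contrast with the degenerate-marginal failure.
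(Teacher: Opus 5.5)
Your approach is the same as the paper's: exhibit one explicit pair in $\{0,1,2\}^6$ and compute the three statistics. The paper uses $y=(0,1,2,2,0,2)$ and $\hat y=(0,2,1,2,2,0)$, giving $r=-1/29$, $\rho=-1/12$, $\tau_b=-1/11$, and your explanation that the average-rank map is affine exactly when the extreme counts are equal matches its follow-up remark.

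Your candidate works without any perturbation, but the proposal stops before the arithmetic, which is the entire content of this proof. Completing it gives:
\begin{itemize}
\item $r=3/\sqrt{58/3}\approx0.682$;
\item $\rho=11/\sqrt{240}\approx0.710$;
\item $\tau_b=7/\sqrt{132}\approx0.609$, with $C=8$ and $D=1$.
\end{itemize}
All three values are positive and have distinct squares $27/58$, $121/240$, and $49/132$, so they are pairwise distinct.
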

\noindent\emph{Proof.} For $y=(0,1,2,2,0,2)$ and $\hat y=(0,2,1,2,2,0)$, direct computation gives $r=-0.0345$, $\rho=-0.0833$, and $\tau_b=-0.0909$; the ancillary code recomputes all three.\hfill$\square$

\noindent Under an ordinal encoding $\{0, \tfrac{1}{2}, 1\}$ for $\{\UNMET,\CA,\MET\}$, the average-rank map is affine only when the \UNMET{} and \MET{} counts are equal, so Spearman and Pearson generally differ. The phi coefficient has no canonical $3\times3$ analog, and multi-class MCC \citep{gorodkin2004comparing} is neither Pearson on an encoding nor Kendall's $\tau_b$. For judge panels (Sec.~\ref{subsec:ensembles}), average pairwise Spearman and $\tau_b$ on three-valued data separate as well; their values depend on the full pairwise three-category contingency tables, including judge-specific marginals and tie structure.
\begin{takeaway}
Choose the handling rule from what \CA{} means in the rubric, not from convenience: exclusion when a hidden \MET/\UNMET{} answer exists, a restricted population or three classes when the criterion can be genuinely inapplicable, recoding only when the target is \MET{} versus everything else. Numbers computed under different handling rules are not comparable, even on the same verdicts (Table~\ref{tab:abstention-example}).
\end{takeaway}

\subsection{Coverage and Bounds on Full-Set Accuracy}
\label{subsec:bounds}

An accuracy of $0.90$ is less informative if the judge abstains on half the cases, which is why selective-prediction papers report performance together with coverage \citep{chow1970optimum,elyaniv2010foundations,kamath2020selective,xin2021abstention,kadavath2022language}. Judge coverage is $\Pr[\hat m=0]$; when the human labels may also abstain, the relevant quantity for exclusion is joint coverage, $\gamma = \Pr[m=0 \wedge \hat m=0]$, the probability that neither side abstains.

Coverage and covered accuracy together still leave the full evaluation set underdetermined. For $0<\gamma<1$, let $A_{\mathrm{cov}}=\Pr(y^*=\hat y^*\mid m=0 \wedge \hat m=0)$ be accuracy on covered cases, and write $A_{\mathrm{uncov}}$ for the unobserved accuracy on the remaining cases. Then
\begin{equation}
\label{eq:accuracy-bounds}
\begin{split}
A_{\mathrm{full}}
&=\gamma A_{\mathrm{cov}}+(1-\gamma)A_{\mathrm{uncov}},\\
A_{\mathrm{full}}
&\in [\gamma A_{\mathrm{cov}},\;\gamma A_{\mathrm{cov}}+1-\gamma].
\end{split}
\end{equation}
The range is sharp: the lower endpoint treats every uncovered pair as a disagreement, the upper endpoint treats every uncovered pair as an agreement, and the width is exactly the uncovered fraction $1-\gamma$. Equation~\eqref{eq:accuracy-bounds} is the worst-case identification bound for a bounded outcome with missing data, specialized to a $0/1$ agreement indicator with abstention as the missingness mechanism \citep{manski1989anatomy,manski2003partial}. The selective-prediction papers cited above pair risk with coverage but, as far as we have found, none states the induced bound on full-set performance, and LLM-judge evaluations report covered accuracy without either.

The interval captures identification uncertainty, what the observed table cannot pin down about how \CA{} cases would have resolved. Sampling uncertainty is separate; a cluster bootstrap is still needed for it.
\begin{takeaway}
Never report covered accuracy without joint coverage: at coverage $\gamma$, full-set accuracy is known only to an interval of width $1-\gamma$, whatever the covered accuracy is. A judge that answers $60\%$ of cases at $0.90$ covered accuracy guarantees a full-set accuracy of only $0.54$.
\end{takeaway}

\subsection{Exact Finite-Allocation Bounds}
\label{subsec:finite-allocation}

The closed form of Eq.~\eqref{eq:accuracy-bounds} is specific to accuracy; an exact empirical bound for any binary-table metric follows by enumeration. Let $n_{ij}$ be an observed $3\times3$ table with human rows and judge columns indexed by $i,j\in\{0,1,C\}$, where $C$ denotes \CA, and consider the compatible latent binary allocations of the five cells containing $C$. Initialize the latent $2\times2$ table as
\begin{equation}
M=\begin{pmatrix}a&b\\c&d\end{pmatrix}
=\begin{pmatrix}n_{00}&n_{01}\\n_{10}&n_{11}\end{pmatrix},
\end{equation}
with human labels as rows and judge labels as columns. Then enumerate
\begin{align}
x_{0C}&=0,\ldots,n_{0C}, & x_{1C}&=0,\ldots,n_{1C},\\
x_{C0}&=0,\ldots,n_{C0}, & x_{C1}&=0,\ldots,n_{C1},
\end{align}
and every nonnegative quadruple $(z_{00},z_{01},z_{10},z_{11})$ satisfying $\sum_{h,j}z_{hj}=n_{CC}$. Here $x_{0C}$ and $x_{1C}$ count allocations to latent judge label 1, while $x_{C0}$ and $x_{C1}$ count allocations to latent human label 1. For each allocation, form
\begin{align}
a' &= a+(n_{0C}-x_{0C})+(n_{C0}-x_{C0})+z_{00},\\
b' &= b+x_{0C}+(n_{C1}-x_{C1})+z_{01},\\
c' &= c+(n_{1C}-x_{1C})+x_{C0}+z_{10},\\
d' &= d+x_{1C}+x_{C1}+z_{11}.
\end{align}
This enumeration has
\begin{equation}
(n_{0C}+1)(n_{1C}+1)(n_{C0}+1)(n_{C1}+1)
\binom{n_{CC}+3}{3}
\end{equation}
candidate allocations before duplicate $2\times2$ tables are collapsed.

For each resulting table, evaluate the desired metric when its denominator is nonzero: with $(a',b',c',d')$ in the roles of $(TN,FP,FN,TP)$, the positive-class $F_1$, $\kappa$, and $\phi$ follow from Eq.~\eqref{eq:f1}, the count form of Sec.~\ref{subsec:kappaphi}, and Eq.~\eqref{eq:phi}.
The minimum and maximum over all defined candidate values are the sharp empirical bounds. If no compatible allocation defines a metric, its bound is undefined. The same procedure applies to accuracy or any statistic computed from a binary confusion table.

Like Eq.~\eqref{eq:accuracy-bounds}, these ranges are identification bounds, now conditional on the observed $3\times3$ counts and applied cell by cell rather than to a single bounded outcome. Sampling intervals require an additional procedure, such as resampling items and recomputing both allocation extrema on every bootstrap sample. Reporting a bootstrap interval without the conditional identification range, or reporting the identification range as a confidence interval, conflates the two sources of uncertainty.

For the $2{,}007$-decision rubric counts of Table~\ref{tab:abstention-example}, in which only the human abstains ($n_{0C}=n_{1C}=n_{CC}=0$), joint coverage $\gamma=0.848$ and covered accuracy $0.647$ give the sharp accuracy range $[0.548, 0.701]$; the recoded accuracy $0.580$ lies inside this range while estimating a different quantity.

If the abstention indicators are conditionally independent of the latent verdicts given observed features $X$, written $(m,\hat m)\mathrel{\perp\!\!\!\perp}(y^*,\hat y^*)\mid X$, and every case is covered with positive probability, $\Pr[m=0\wedge\hat m=0\mid X]>0$ almost surely, then reweighting covered cases by $1/\Pr[m=0\wedge\hat m=0\mid X]$ (standardization or inverse-probability weighting) recovers the full latent table \citep{littlerubin2019}.

\subsection{Example: Handling Rules and Bounds}
\label{subsec:case-cascade}

As a case study in how the handling rule alone moves the reported number, we now complete the reconstruction behind the three accuracies that opened this paper. \citet{jung2025trust} build a judge cascade that answers with a cheap model when it is confident and escalates or abstains otherwise, trading coverage for accuracy. Their public release includes human preferences and judge probabilities for $4{,}718$ test pairs, from which we rebuild the full verdict table (Appendix~\ref{app:public-abstention}); the cascade returns a verdict on $2{,}886$ of them, for joint coverage $\gamma=0.612$. The $0.874$ is covered accuracy, agreement restricted to those pairs; the $0.73$ is the recoded accuracy, $0.730$ or $0.727$ depending on which pairwise option absorbs the abstentions; the $0.534$ is three-class accuracy.\footnote{The other metrics move in step (Appendix~\ref{app:public-abstention}); covered $\kappa=\phi$ because the two marginal rates are nearly equal (Eq.~\eqref{eq:ratio}).} The quantity still missing is the bound: with $\gamma=0.612$ and $A_{\mathrm{cov}}=0.874$, Eq.~\eqref{eq:accuracy-bounds} confines full-set forced-choice accuracy to $[0.534, 0.923]$; the lower endpoint equals the three-class accuracy because, with only the cascade abstaining, every abstention counts as a disagreement in both.

These four numbers describe one calibration/test partition, the one the release ships. Redrawing that partition at the released sizes $1{,}000$ times and rerunning the released calibration and decision rule on each (Appendix~\ref{app:public-abstention}) leaves coverage anywhere between $0.412$ and $0.818$ (median $0.628$) and covered accuracy between $0.829$ and $0.936$ (median $0.883$). The pairing of $0.874$ with $\gamma=0.612$ is therefore a property of one draw, not of the cascade: a different split would yield a different pair, and neither the split nor its effect is recoverable from the published number alone.

\subsection{A Rubric Benchmark with Human Abstention}
\label{subsec:case-rubric}

\begin{figure}[t]
\centering
\includegraphics[width=\columnwidth]{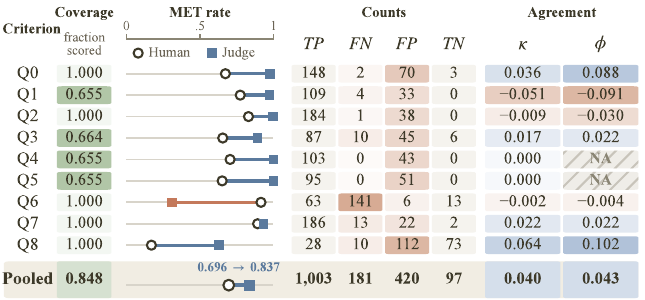}
\caption{Per-criterion agreement between a GPT-3.5-turbo-16k judge and a human annotator on the nine rubric criteria of \citet{hashemi2024llmrubric}, under the reference protocol (most probable option as the verdict, \MET{} $=$ rating $\ge3$, human abstentions excluded). Coverage is the fraction of the $223$ dialogues the human scored for that criterion. Blue dumbbell segments mark a higher judge \MET{} rate, orange a lower one; cell shading scales with magnitude, zero-centered for $\kappa$ and $\phi$. Hatched cells mark the undefined $\phi$ on Q4 and Q5, where the judge never returns \UNMET{}, reported as NA rather than $0$. Appendix~\ref{app:rubric-table} tabulates the exact values.}
\label{fig:rubric}
\end{figure}

The cascade is pairwise, and the side that abstains is the judge. We turn now to per-criterion rubric judgments in which the side that abstains is the human. \citet{hashemi2024llmrubric} release, for $223$ human--AI dialogues, nine rubric criteria rated $1$--$4$ by a human annotator and independently by a GPT-3.5-turbo-16k judge that emits a probability over the same four options: $2{,}007$ paired per-criterion decisions. A human rating of $0$ records that the criterion does not apply, so the human abstains on $306$ decisions ($15.2\%$), while the judge is forced-choice and never abstains. Reading a rating of $3$ or $4$ as \MET{} and the judge's most probable option as its verdict, we obtain the per-criterion tables of Fig.~\ref{fig:rubric}.

The per-criterion tables show what a pooled number hides. The judge's \MET{} rate exceeds the human's on eight of the nine criteria and reaches $1.000$ on Q4 and Q5, where it never returns \UNMET{}; $\phi$ is undefined on those two, and entering $0$ for it rather than NA would quietly bias any average over criteria. Q6 runs the other way, a human \MET{} rate of $0.915$ against the judge's $0.309$, and it is there that the marginal-mismatch factor of Eq.~\eqref{eq:ratio} does its work. Coverage is itself a per-criterion quantity, ranging from $0.655$ to $1.000$ as the human declines to score some criteria far more often than others. Across the nine criteria $\kappa$ runs from $-0.051$ to $0.064$: this judge and this annotator agree at about chance everywhere, which is the finding.

The protocol is a choice at four points: verdict extraction (the judge's most probable option, or the sampled option shipped with the release) and the \MET{} threshold ($\ge2$, $\ge3$, $\ge4$), both within $s$; the abstention rule $h$ (exclusion or recoding); and the aggregation level $a$ (micro or macro). Crossing them gives $24$ protocols on the same $2{,}007$ verdicts. Reported accuracy across them runs from $0.551$ to $0.899$, a spread of $34.8$ points. Extraction alone changes $703$ of the $2{,}007$ verdicts ($35\%$), and the release documents no choice between the two options. Chance-corrected agreement moves less in absolute terms but changes sign, $-0.067$ to $0.068$; because this judge's agreement is near zero under every protocol, that interval is better read as a near-null effect reported with either sign than as a disagreement about the judge's quality. The accuracy spread is the substantive quantity, and it is of the same order as the $34$ points separating the handling rules on the cascade's released split (Sec.~\ref{subsec:case-cascade}), on entirely different data.
\begin{takeaway}
On real rubric data, the four protocol choices move reported accuracy by $34.8$ points and move $\kappa$ across zero, without touching a single verdict. Report the protocol before the number, and report per-criterion tables: pooled agreement here conceals two criteria on which the judge never returns \UNMET{} and $\phi$ does not exist.
\end{takeaway}

\section{Aggregating Criteria, Items, and Judges}
\label{sec:aggregation}

\begin{figure*}[!t]
\centering
\newcommand{\checklistfont}{\small}
\renewcommand{\arraystretch}{1.0}
\scalebox{0.95}{%
\begin{tikzpicture}[
  band/.style={inner sep=4pt, font=\checklistfont},
  chip/.style={fill=#1, rotate=90, rounded corners=3pt, inner sep=3pt, align=center, font=\sffamily\bfseries\fontsize{7}{8}\selectfont, execute at begin node=\hyphenpenalty10000\exhyphenpenalty10000\relax},
  colhead/.style={font=\sffamily\bfseries\fontsize{7}{8}\selectfont, inner sep=2pt, anchor=south west},
]
\newcommand{\bandcards}[2]{%
  \begin{scope}[on background layer]
    \fill[#2, rounded corners=4pt] (#1.north west) rectangle ($(#1.south west)+(7.15cm,0)$);
    \fill[#2, rounded corners=4pt] ($(#1.north west)+(7.36cm,0)$) rectangle (#1.south east);
  \end{scope}}
\node[band] (b1) {\begin{tabular}{@{}>{\raggedright\arraybackslash}p{6.9cm} p{9.1cm}@{}}
1. The judgment scale, before the metric & Each scale admits different statistics; Fact~\ref{obs:equiv} requires binary verdicts. \\
2. At most one of Pearson, Spearman, Kendall's $\tau_b$, $\phi$, and MCC on binary data & They are identical there (Fact~\ref{obs:equiv}); $\phi$/MCC is the most direct. \\
3. The metric matched to the target, with both \MET{} rates & Eq.~\eqref{eq:ratio} fixes $\kappa$'s normalization; other metrics are distinct evidence.
\end{tabular}};
\bandcards{b1}{catscale}
\node[band, below=3pt of b1] (b2) {\begin{tabular}{@{}>{\raggedright\arraybackslash}p{6.9cm} p{9.1cm}@{}}
4. The $2 \times 2$ confusion matrix and $N$ & Summary metrics are invariant under $FP{\leftrightarrow}FN$; the table is not. \\
5. The handling of degenerate criteria & Marked NA, never zero; a zero biases any average over criteria.
\end{tabular}};
\bandcards{b2}{cattables}
\node[band, below=3pt of b2] (b3) {\begin{tabular}{@{}>{\raggedright\arraybackslash}p{6.9cm} p{9.1cm}@{}}
6. The handling rule for \CA{}, ties, and invalid outputs & Excluding, recoding, or retaining \CA{} changes the estimand (Sec.~\ref{subsec:modes}). \\
7. Abstention, tie, invalid, and coverage rates & Equal agreement at unequal coverage is a different operating point. \\
8. Under exclusion, covered performance with joint coverage & The interval of Eq.~\eqref{eq:accuracy-bounds} is identification, not sampling, uncertainty. \\
9. Disagreement costs for multi-class reporting & No \CA{} weight is universal; weights follow from application losses.
\end{tabular}};
\bandcards{b3}{catabst}
\node[band, below=3pt of b3] (b4) {\begin{tabular}{@{}>{\raggedright\arraybackslash}p{6.9cm} p{9.1cm}@{}}
10. The aggregation level and resampling unit & Micro, macro, and item-level scores target different quantities (Sec.~\ref{sec:aggregation}). \\
11. One primary ensemble statistic with supporting tables & The supporting tables carry what a single pooled statistic conceals.
\end{tabular}};
\bandcards{b4}{cataggr}
\node[chip=catscale,  text width=0.95cm] at ($(b1.west)+(-0.51cm,0)$) {Scale \&\\ metric};
\node[chip=cattables, text width=1.18cm]  at ($(b2.west)+(-0.51cm,0)$) {Tables \&\\ marginals};
\node[chip=catabst,   text width=1.45cm]  at ($(b3.west)+(-0.51cm,0)$) {Abstention\\ \& coverage};
\node[chip=cataggr,   text width=1.5cm]  at ($(b4.west)+(-0.51cm,0)$) {Aggregation\\ \& ensembles};
\node[colhead] at ($(b1.north west)+(0.14cm,0.06cm)$) {What to Report};
\node[colhead] at ($(b1.north west)+(7.5cm,0.06cm)$) {Why};
\end{tikzpicture}}%
\caption{Reporting checklist for reconstructible LLM-judge agreement claims; Sec.~\ref{sec:checklist} discusses each item.}
\label{fig:checklist}
\end{figure*}

Because a rubric benchmark produces several criterion verdicts per item, validating a judge requires deciding what one observation is. \emph{Micro-averaging} pools every item--criterion verdict before computing the metric. \emph{Macro-averaging} computes a separate score for each criterion and averages those scores with declared weights. \emph{Item-level aggregation} first combines an item's criterion verdicts into one score, using the rubric's weights and decision rule, and then measures agreement on items.

The rubric benchmark of Sec.~\ref{subsec:case-rubric} shows the effect. Pooling its $1{,}701$ covered decisions gives micro-$\kappa=0.040$; averaging the nine per-criterion values of Fig.~\ref{fig:rubric} gives macro-$\kappa=0.009$, smaller by a factor of more than four. The two summaries weight different things: pooling weights decisions, so criteria the human scored most often dominate, while the macro average weights criteria equally and therefore carries the five criteria whose $\kappa$ is zero or negative. Neither summary reveals the range those nine values span, $-0.051$ to $0.064$, and neither records that $\phi$ does not exist on two of them. Both summaries happen to support the same conclusion about this judge, because both are near zero; the choice bites when two judges are ranked, where the level rather than the sign decides. Item-level $\kappa$ additionally requires each item's full set of criterion verdicts and a reduction rule; the handling of degenerate single-label criteria, the included/total denominator, and the resampling unit all belong in the report (Fig.~\ref{fig:checklist}).
\begin{takeaway}
Report the aggregation level with the number, and never compare judges across levels. A gap between micro and macro estimates signals that agreement varies across criteria; inspect the per-criterion tables before trusting either average.
\end{takeaway}

\subsection{Agreement Within Judge Ensembles}
\label{subsec:ensembles}

Evaluation harnesses often ensemble several judges to reduce prompt sensitivity and model-specific effects \citep{wang2023fairevaluators,verga2024replacing,bavaresco2025llms,rao2026autorubric}. Here we measure agreement \emph{among the member judges}; validating an ensemble's final vote against a human reference is the problem of the preceding sections. Inter-judge disagreement may reflect rubric ambiguity, judge-specific bias, or sampling instability, and a single number alone cannot separate these causes. For $R\ge2$ judges rating each case, common choices include Fleiss' $\kappa$ \citep{fleiss1971measuring} and Krippendorff's $\alpha$ \citep{krippendorff2004content}.

Assume a complete $N\times R$ matrix of binary verdicts with nominal disagreement. Let $\hat{y}_{ij} \in \{0, 1\}$ denote judge $j$'s verdict on item $i$, and let $r_i = \sum_{j=1}^R \hat{y}_{ij}$. With global rates $p_1 = (NR)^{-1} \sum_i r_i$ and $p_0 = 1 - p_1$, Fleiss' $\kappa$ has observed pairwise agreement
\begin{equation}
P_o = \frac{1}{N R(R-1)} \sum_{i=1}^N \bigl[ r_i(r_i - 1) + (R - r_i)(R - r_i - 1)\bigr]
\end{equation}
and expected agreement $P_e = p_0^2 + p_1^2$, giving $\kappa_F = (P_o - P_e)/(1 - P_e)$ \citep{fleiss1971measuring}; $\kappa_F$ is the multi-rater generalization of Scott's $\pi$ \citep{scott1955reliability}. Krippendorff's $\alpha$ on the same data uses observed disagreement $D_o = 1 - P_o$ and expected disagreement $D_e = (1 - P_e) \cdot NR/(NR - 1)$, where the factor $NR/(NR-1)$ is a finite-sample correction \citep{krippendorff2004content}. The two statistics determine each other.

\begin{fact}[\citealp{artstein2008intercoder}]\label{obs:ensemble}
On complete data with $R$ judges and $N$ items under nominal disagreement,
\begin{equation}
\label{eq:obs2}
\alpha = \kappa_F + \frac{1 - \kappa_F}{NR},
\end{equation}
so $\alpha \ge \kappa_F$, with equality iff $\kappa_F = 1$, and $\alpha \to \kappa_F$ as $N \to \infty$.
\end{fact}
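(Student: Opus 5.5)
Since the paper defines both statistics through the same observed pairwise agreement $P_o$ and the same $P_e=p_0^2+p_1^2$, the plan is to write $\alpha$ and $\kappa_F$ as functions of $D_o=1-P_o$ and $u=1-P_e$ and eliminate $P_o$. By definition $\kappa_F=(P_o-P_e)/(1-P_e)=1-D_o/u$, so $D_o=(1-\kappa_F)\,u$. Krippendorff's $\alpha$ is $1-D_o/D_e$ with $D_e=u\cdot NR/(NR-1)$. I will assume $u>0$, i.e.\ both verdicts occur somewhere in the matrix; otherwise neither statistic is defined, and I will note this as the non-degeneracy condition, in line with the paper's convention elsewhere.

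Substituting gives
\begin{equation*}
\alpha = 1-\frac{(1-\kappa_F)\,u\,(NR-1)}{u\,NR} = 1-(1-\kappa_F)\Bigl(1-\frac{1}{NR}\Bigr) = \kappa_F+\frac{1-\kappa_F}{NR},
\end{equation*}
which is Eq.~\eqref{eq:obs2}. The factor $u$ cancels, so no further properties of $P_e$ are needed.

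For the corollaries, $\alpha-\kappa_F=(1-\kappa_F)/(NR)$, so $\alpha\ge\kappa_F$ with equality iff $\kappa_F=1$, provided $\kappa_F\le1$. That bound follows from $D_o\ge0$, which holds because $P_o\le1$: each bracket $r_i(r_i-1)+(R-r_i)(R-r_i-1)$ is at most $R(R-1)$. To see this, write $s=r_i$. Then the bracket equals $R(R-1)-2s(R-s)$, and $s(R-s)\ge0$. Finally, since $|1-\kappa_F|$ is bounded for fixed $R$ when $u$ stays bounded away from $0$ (and in any case at fixed $\kappa_F$), the correction term $(1-\kappa_F)/(NR)$ tends to $0$ as $N\to\infty$.

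The main obstacle is bookkeeping rather than depth. First, the $\alpha$ used must be the paper's stated form, with $D_o=1-P_o$ and $D_e$ carrying exactly the factor $NR/(NR-1)$; Krippendorff's original coincidence-matrix definition would require first checking that it reduces to this form on complete binary data. Second, the degenerate case $P_e=1$ must be excluded explicitly. I would state both points at the start and then carry out the one-line substitution.
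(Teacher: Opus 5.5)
Your proposal is correct and follows the paper's own argument. Like the paper, you substitute $D_o=1-P_o$ and $D_e=(1-P_e)\,NR/(NR-1)$ into $\alpha=1-D_o/D_e$, use $1-P_o=(1-\kappa_F)(1-P_e)$, and simplify; your explicit non-degeneracy condition $P_e<1$ and your check that $\kappa_F\le1$ (via $P_o\le1$, which gives $\alpha\ge\kappa_F$) fill in details the paper leaves implicit.
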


That $D_o$ is the complement of Fleiss' observed agreement, and that the expected disagreements differ only by this factor, are established by \citet{artstein2008intercoder}. Substituting these definitions into $\alpha = 1 - D_o/D_e$ and using $1-P_o=(1-\kappa_F)(1-P_e)$, which restates the definition of $\kappa_F$,
\begin{align*}
\alpha &= 1-\frac{NR-1}{NR}\cdot\frac{1-P_o}{1-P_e}
= 1-\frac{NR-1}{NR}\,(1-\kappa_F)\\
&= \kappa_F+\frac{1-\kappa_F}{NR},
\end{align*}
which is the identity of Eq.~\eqref{eq:obs2}. Simulation studies report the same convergence without the closed form \citep{zapf2016measuring}.

Fact~\ref{obs:ensemble} only restates the documented $\alpha$--$\kappa_F$ relationship in closed form; it isolates the finite-sample difference between $\alpha$ and $\kappa_F$: for typical values ($\kappa_F=0.6$, $N=100$, $R=3$) the difference is about $1.3\times10^{-3}$, below the two decimal places at which agreement coefficients are typically reported, but a report should still name which convention it uses. On binary panels Fact~\ref{obs:equiv} applies to each judge pair, and the average pairwise $\phi$ coincides with $\kappa_F$ exactly when all judges share the same marginal \MET{} rate \citep{conger1980integration}; without matched marginals the statistics measure different association structure and should be accompanied by judge-specific marginals or confusion matrices. With three or more categories the identity of Eq.~\eqref{eq:obs2} extends but the binary equivalences do not.

Average pairwise Pearson requires a numeric encoding of $\{$\MET, \UNMET, \CA$\}$; the result depends on the encoding (e.g., $\{1, 0, 1/2\}$ versus deleting \CA{} pairs), and no encoding is canonical. Fleiss' $\kappa$ on three nominal categories weights \MET\textendash{}\UNMET{} and \MET\textendash{}\CA{} confusions equally, while Krippendorff's $\alpha$ with a quadratic-ordinal $\delta$-function gives \MET\textendash{}\UNMET{} greater weight than \MET\textendash{}\CA, which encodes the assumption that \CA{} is intermediate; two defensible cost choices can therefore reverse a ranking of judge ensembles purely through their disagreement functions.

The three-class setup determines no application-independent statistic until disagreement costs are chosen: for values $s=(0,1/2,1)$ assigned to $(\UNMET,\CA,\MET)$, nominal, linear, and quadratic costs are $D_{ab}=\mathbf{1}[a\ne b]$, $|s_a-s_b|$, and $(s_a-s_b)^2$. A complete report states the cost matrix $D$ before any three-class agreement statistic, justifies it from application losses, and includes sensitivity to plausible alternatives and the per-judge $3\times3$ tables (checklist item 9, Fig.~\ref{fig:checklist}).
\begin{takeaway}
Agreement among ensemble members can be high while every member shares the same bias, so it never substitutes for validation against a human reference. When reporting it, name the convention: on the same panel, Krippendorff's $\alpha$ exceeds Fleiss' $\kappa$ by a finite-sample term that vanishes only as the panel grows.
\end{takeaway}

\section{A Reporting Checklist}
\label{sec:checklist}

Figure~\ref{fig:checklist} collects the practices that let a reader reconstruct an agreement claim: its target, its evidence, its scope, and its unit of inference. Because judges compared under different protocols can differ in the reported number while agreeing in behavior, protocol equality precedes any claim that one judge is better. The remainder of this section states each checklist item's full rationale.

\paragraph{Items 1--3: scale and metric.}
Item 1 places the judgment scale before the metric because the scale determines which statistics are defined at all (Sec.~\ref{subsec:scale}): the identity among the five binary agreement metrics is exact only for binary verdicts, and ordinal or graded outputs call for different statistics. Item 2 follows from that identity: on non-degenerate binary vectors, Pearson's $r$, Spearman's $\rho$, Kendall's $\tau_b$, $\phi$, and MCC are one statistic, so a report should give at most one of them, and $\phi$/MCC states binary association most directly. Item 3 matches the remaining metric to the evaluation target and requires both \MET{} rates because the marginal-mismatch factor of Eq.~\eqref{eq:ratio} fixes $\kappa$'s normalization relative to $\phi$/MCC; class-conditional metrics, calibration (when the judge emits probabilities), and the confusion matrix answer different questions and are distinct evidence rather than substitutes.

\paragraph{Items 4--5: tables and marginals.}
Item 4 asks for the $2\times2$ confusion matrix and $N$ because the summary metrics cannot separate a judge that misses true positives from one that over-predicts (Sec.~\ref{sec:protocol}); only the table can, and $N$ fixes the effective sample size behind any interval. For LLM-judge validation, judge strictness or leniency is often the most actionable diagnosis, and the table is what reveals it. Item 5 covers degenerate criteria, on which the judge or the human uses a single label: undefined metrics should be marked NA and never entered as zero (Sec.~\ref{subsec:case-rubric} shows the induced bias on real rubric data); the report should give the constant-label counts and state the macro denominator as included criteria over total criteria, so that comparisons stay on a common eligible set.

\paragraph{Items 6--9: abstention and coverage.}
Item 6 states the handling rule for \CA{}, ties, and invalid outputs because excluding, recoding, or retaining \CA{} changes the estimand, and because ties and invalid outputs, unlike \CA{}, lack a canonical recode-as-negative mapping: a tie is not a loss for either side, and an invalid generation is not an \UNMET{} verdict, so any mapping is a modeling choice that belongs in the report. Item 7 separates the rates: human and judge abstention rates individually, tie and invalid rates, and, when both sides may abstain, the joint coverage, because equal covered agreement at different coverage describes different operating points. Tracked over time, the abstention rate is also a leading indicator: abstention drift is often the first observable symptom of prompt instability or distribution shift. Item 8 pairs covered performance with joint coverage under exclusion because full-set accuracy is identified only to an interval whose width is the uncovered fraction; that interval is identification uncertainty, distinct from sampling error, and reweighting recovers the full table only when observed features explain abstention (Sec.~\ref{subsec:finite-allocation}). Where possible, examine whether abstention correlates with item difficulty, criterion type, prompt, or judge model before treating the covered cases as representative. Item 9 requires disagreement costs for any three-class report because no \CA{} weight is universal (Sec.~\ref{subsec:ensembles}): nominal costs apply when no ordering among the three categories is justified, and otherwise the weights follow from application losses.

\paragraph{Items 10--11: aggregation and ensembles.}
Item 10 names the aggregation level and the resampling unit because micro, macro, and item-level scores target different quantities; the effective $N$ differs across levels, and a cluster bootstrap must resample the unit that carries the dependence, typically whole items. The correspondence runs level by level: a criterion-level cluster bootstrap matches macro-averaging, an item-level cluster bootstrap matches item-level aggregation, and a flat decision-level bootstrap matches only micro-averaging, and then only under the usually false assumption that decisions are independent. Item 11 limits an ensemble report to one primary statistic with supporting tables: with human labels, per-judge confusion matrices; without them, judge-specific marginals and pairwise tables, together with the rules for missing verdicts and judge weighting, because the supporting tables carry the marginal and pairwise structure that a single pooled statistic conceals.

\paragraph{The human reference.}
The checklist validates a judge against a human reference and treats that reference as given; the reference itself can be strengthened through multiple-rater validation and construct-validity analysis \citep{plank2022problem,casabianca2025validity}.

\section{Conclusion}
\label{sec:conclusions}

An LLM-judge agreement number estimates whatever quantity the measurement protocol defines, so examining the protocol is part of evaluating the judge. Treating the scale, subset, handling, and aggregation choices as part of the evaluation target brings four established results to bear: five binary agreement metrics collapse to one statistic, $\kappa$ differs from $\phi$ only through a marginal-mismatch factor \citep{sahu2024linear} with matching asymptotic variance at equal \MET{} rates, exclusion identifies full-set accuracy only up to an interval as wide as the uncovered fraction \citep{manski1989anatomy}, and on complete binary judge panels Krippendorff's $\alpha$ and Fleiss' $\kappa$ determine each other exactly \citep{artstein2008intercoder}. In the cascade of Sec.~\ref{subsec:case-cascade}, that interval spans $0.534$ to $0.923$, and each of the three headline accuracies estimates a different quantity; on the rubric benchmark of Sec.~\ref{subsec:case-rubric}, the four protocol choices alone move reported accuracy by $34.8$ points on one judge's verdicts. The operational sequence follows the analysis: scale, then data subset, then handling rule, then aggregation rule, and only then the statistic. Reporting the contingency tables, marginals, coverage, and effective sample size makes the resulting claim reconstructible.

\medskip
\noindent\textbf{Limitations and Future Work.}
The exact identities require binary verdicts with both labels present (for the other scales, see the taxonomy of Table~\ref{tab:scale}), the equal-variance result is asymptotic and specific to matched \MET{} rates, and hierarchical models could further separate variation due to items, criteria, and judges. Both abstention case studies are unilateral: the cascade abstains on the judge side and the rubric annotator on the human side, and we did not find a released evaluation in which both sides may abstain, which is the case the analysis of Sec.~\ref{sec:abstention} covers in full. The rubric judge is one model on one domain, and its agreement with the annotator is near zero under every protocol, so its protocol spread shows what the choices do to a reported number rather than how a strong judge would fare. The analysis also treats the observed verdict counts as fixed: the identities hold exactly on any dataset on which they are computed, while the inferential statements (the variance equality of Sec.~\ref{subsec:variance} and any bootstrap interval) additionally require a sampling model, which is natural when generalizing to a wider item population or accounting for judge stochasticity at nonzero temperature, and vacuous for a deterministic judge scored once on a fixed evaluation set. Finally, we do not characterize the finite-sample behavior of weighted-$\kappa$ ensembles under arbitrary disagreement-cost matrices, nor the behavior of judge ensembles whose members are not exchangeable, for example one anchored on a frontier model and the rest on small open-weight models.

\medskip
\noindent\textbf{Ethical Statement.}
This work analyzes the relationships among agreement metrics and reports literature coding for 24 papers; it introduces no new benchmark, model, participant data, or human-subjects experiment. We see no major ethical considerations arising from the work itself. The reporting practices in this paper are intended to make comparisons between automated and human judgments transparent and reconstructible.

\medskip
\noindent\textbf{Code Availability.}
Python scripts that reproduce every computed number in this paper, together with the literature-scan coding, accompany the arXiv submission as ancillary files.

\medskip
\noindent\textbf{Acknowledgments.}
This research was developed with funding from the Defense Advanced Research Projects Agency's (DARPA) SciFy program (Agreement No.~HR00112520300). The views expressed are those of the authors and do not reflect the official policy or position of the Department of Defense or the U.S. Government.

\medskip
\noindent\textbf{Generative AI Use Disclosure.}
Gemini-Pro 3.1 and Claude Sonnet 4.6 assisted with proofreading, figure creation, and rewriting for an L2-English author; the corresponding author reviewed all outputs.

\bibliography{paper}

\appendix
\renewcommand{\thetable}{A\arabic{table}}
\setcounter{table}{0}
\renewcommand{\theequation}{A\arabic{equation}}
\setcounter{equation}{0}
\renewcommand{\thefigure}{A\arabic{figure}}
\setcounter{figure}{0}

\section{Public Abstention Reconstruction}
\label{app:public-abstention}

We reconstruct a unilateral abstention example from the public release accompanying \citet{jung2025trust}. The release contains human pairwise preferences and pre-generated probabilities from three LLM judges for 500 calibration pairs and 4,718 test pairs. We used repository commit \texttt{1b969f2}, the released cascade order (Mistral-7B-Instruct, GPT-3.5-Turbo, then GPT-4-Turbo), and the authors' example settings $\alpha=0.15$ and $\delta=0.1$. Applying the released calibration and decision rule to the test split gives Table~\ref{tab:public-abstention-counts}.

\begin{table}[tb]
\centering
\small
\begin{tabular}{@{}lrrr@{}}
\toprule
& \multicolumn{3}{c}{\textbf{Cascade verdict}} \\
\cmidrule(l){2-4}
\textbf{Human option} & \textbf{0} & \textbf{1} & \textbf{Abstain} \\
\midrule
0 & 1,251 & 184 & 921 \\
1 & 181 & 1,270 & 911 \\
\bottomrule
\end{tabular}
\caption{Public cascade reconstruction ($N=4{,}718$). The human reference is binary; only the cascade abstains.}
\label{tab:public-abstention-counts}
\end{table}

The cascade covers 2,886 pairs (coverage $=0.6117$). On that covered subset, accuracy is $0.8735$, Cohen's $\kappa=0.7470$, and $\phi=0.7470$. Table~\ref{tab:public-abstention-strategies} shows how the same counts support different reported values under different handling rules.

\begin{table}[tb]
\centering
\small
\begin{tabular}{@{}lrrr@{}}
\toprule
\textbf{Handling rule} & \textbf{Accuracy} & $\boldsymbol{\kappa}$ & $\boldsymbol{\phi}$ \\
\midrule
Exclude abstentions & 0.8735 & 0.7470 & 0.7470 \\
Recode abstention as 0 & 0.7295 & 0.4594 & 0.4977 \\
Recode abstention as 1 & 0.7274 & 0.4546 & 0.4941 \\
Retain three classes & 0.5343 & 0.3292 & -- \\
\bottomrule
\end{tabular}
\caption{Agreement from the same public counts under the four handling rules. Exclusion uses $N=2{,}886$; the other rows use $N=4{,}718$.}
\label{tab:public-abstention-strategies}
\end{table}

In the exclusion model of Sec.~\ref{subsec:modes}, every abstained pair carries a latent forced-choice verdict pair $(y^*,\hat y^*)$ that abstention masks; under that model, full-population forced-choice accuracy is sharply bounded by $[0.5343,0.9226]$: the lower endpoint treats all 1,832 abstentions as incorrect, and the upper endpoint treats all as correct. This is a public-data special case, not a bilateral \CA{} example. It is also pairwise: options 0 and 1 denote response positions rather than semantic negative and positive classes. Neither recode direction is therefore canonical, which is why both appear in Table~\ref{tab:public-abstention-strategies}.

\paragraph{Dependence on the calibration/test split.}
The counts above come from the single partition the release ships. Because the authors report figures averaged over $1{,}000$ random splits, we pooled the released calibration and test items ($500+4{,}718=5{,}218$ pairs), redrew $1{,}000$ partitions at the same sizes, and reran the released calibration and decision rule on each. Applied to the release's own partition, this procedure returns the values of Table~\ref{tab:public-abstention-strategies} exactly, which anchors the sweep. Table~\ref{tab:public-abstention-splits} reports the resulting distributions.

\begin{table}[t]
\centering
\small
\setlength{\tabcolsep}{4pt}
\begin{tabular}{@{}lrrr@{}}
\toprule
Quantity & Median & IQR & Min\textendash{}Max \\
\midrule
Coverage & 0.6283 & 0.1026 & 0.4118\textendash{}0.8179 \\
Covered accuracy & 0.8827 & 0.0258 & 0.8287\textendash{}0.9357 \\
Covered $\kappa$ & 0.7655 & 0.0516 & 0.6574\textendash{}0.8713 \\
Recode as 0, accuracy & 0.7344 & 0.0309 & 0.6776\textendash{}0.7891 \\
Recode as 1, accuracy & 0.7378 & 0.0300 & 0.6789\textendash{}0.7808 \\
Three-class accuracy & 0.5521 & 0.0800 & 0.3853\textendash{}0.6823 \\
\midrule
Within-split range over rules & 0.3269 & 0.0967 & 0.1519\textendash{}0.5503 \\
\bottomrule
\end{tabular}
\caption{Distribution over $1{,}000$ redrawn calibration/test splits of the released data. The last row is the range the four handling rules span within a single split.}
\label{tab:public-abstention-splits}
\end{table}

Coverage is the least stable quantity: with only $500$ calibration pairs, the calibrated thresholds move enough to place coverage anywhere between $0.41$ and $0.82$. Three-class accuracy inherits that instability, since it is coverage multiplied by covered accuracy. The pairing of a covered accuracy with a coverage, which is how selective judges are usually reported, is therefore a property of the split as much as of the cascade, and neither is recoverable from a published pair alone.

All reconstructions and reanalyses in this paper are deterministic, CPU-only computations in Python with \texttt{numpy} and \texttt{scipy}; the split sweep completes in about a minute on a laptop and every other analysis in seconds. The code provided as ancillary material lists the exact library versions and reproduces every reported value.

\section{Per-Criterion Rubric Benchmark Numbers}
\label{app:rubric-table}

Table~\ref{tab:rubric-full} tabulates the exact values summarized in Fig.~\ref{fig:rubric}: per-criterion coverage, human and judge \MET{} rates, verdict counts, and agreement under the reference protocol.

\begin{table*}[!t]
\centering
\small
\begin{tabular}{@{}lrrrrrrrrr@{}}
\toprule
& & \multicolumn{2}{c}{\MET{} rate} & \multicolumn{4}{c}{Counts} & & \\
\cmidrule(lr){3-4}\cmidrule(lr){5-8}
Criterion & Coverage & Human & Judge & $TP$ & $FN$ & $FP$ & $TN$ & $\kappa$ & $\phi$ \\
\midrule
Q0 & 1.000 & 0.673 & 0.978 & 148 & 2 & 70 & 3 & \phantom{-}0.036 & \phantom{-}0.088 \\
Q1 & 0.655 & 0.774 & 0.973 & 109 & 4 & 33 & 0 & -0.051 & -0.091 \\
Q2 & 1.000 & 0.830 & 0.996 & 184 & 1 & 38 & 0 & -0.009 & -0.030 \\
Q3 & 0.664 & 0.655 & 0.892 & 87 & 10 & 45 & 6 & \phantom{-}0.017 & \phantom{-}0.022 \\
Q4 & 0.655 & 0.705 & 1.000 & 103 & 0 & 43 & 0 & \phantom{-}0.000 & NA \\
Q5 & 0.655 & 0.651 & 1.000 & 95 & 0 & 51 & 0 & \phantom{-}0.000 & NA \\
Q6 & 1.000 & 0.915 & 0.309 & 63 & 141 & 6 & 13 & -0.002 & -0.004 \\
Q7 & 1.000 & 0.892 & 0.933 & 186 & 13 & 22 & 2 & \phantom{-}0.022 & \phantom{-}0.022 \\
Q8 & 1.000 & 0.170 & 0.628 & 28 & 10 & 112 & 73 & \phantom{-}0.064 & \phantom{-}0.102 \\
\midrule
Pooled & 0.848 & 0.696 & 0.837 & 1{,}003 & 181 & 420 & 97 & \phantom{-}0.040 & \phantom{-}0.043 \\
\bottomrule
\end{tabular}
\caption{Numeric form of Fig.~\ref{fig:rubric}: per-criterion coverage, \MET{} rates, verdict counts, $\kappa$, and $\phi$ for the nine rubric criteria of \citet{hashemi2024llmrubric} under the reference protocol (most probable option as the verdict, \MET{} $=$ rating $\ge3$, human abstentions excluded). $\phi$ is undefined on Q4 and Q5, where the judge never returns \UNMET{}, and is reported as NA rather than $0$.}
\label{tab:rubric-full}
\end{table*}

\end{document}